\def\eqref#1{equation~\ref{#1}}
\def\1{\bm{1}}
\mathchardef\mhyphen="2D
\def\Scal{{\mathcal{S}}}
\def\Dcal{{\mathcal{D}}}
\def\Acal{{\mathcal{A}}}
\def\Jcal{{\mathcal{J}}}
\def\Qcal{{\mathcal{Q}}}
\def\Ncal{{\mathcal{N}}}
\def\rmH{{\mathbf{H}}}
\DeclareMathAlphabet{\mathsfit}{\encodingdefault}{\sfdefault}{m}{sl}
\SetMathAlphabet{\mathsfit}{bold}{\encodingdefault}{\sfdefault}{bx}{n}
\newcommand{\E}{\mathbb{E}}
\newcommand{\R}{\mathbb{R}}
\newcommand{\KL}{D_{\mathrm{KL}}}
\DeclareMathOperator*{\argmax}{arg\,max}
\newtheorem{Theorem}{Theorem}
\newtheorem{Lemma}{Lemma}
\newtheorem{Proposition}{Proposition}
\newtheorem{Remark}{Remark}
\newtheorem{Assumption}{Assumption}
\icmltitlerunning{
Constrained Variational Policy Optimization for Safe Reinforcement Learning
}
\begin{document}

\twocolumn[
\icmltitle{
Constrained Variational Policy Optimization for Safe Reinforcement Learning
}



\icmlsetsymbol{equal}{*}

\begin{icmlauthorlist}
\icmlauthor{Zuxin Liu}{cmu}
\icmlauthor{Zhepeng Cen}{cmu}
\icmlauthor{Vladislav Isenbaev}{nuro}
\icmlauthor{Wei Liu}{nuro}
\icmlauthor{Zhiwei Steven Wu}{cmu}
\icmlauthor{Bo Li}{uiuc}
\icmlauthor{Ding Zhao}{cmu}
\end{icmlauthorlist}

\icmlaffiliation{cmu}{Carnegie Mellon University}
\icmlaffiliation{nuro}{Nuro Inc.}
\icmlaffiliation{uiuc}{University of Illinois Urbana-Champaign}

\icmlcorrespondingauthor{Zuxin Liu}{zuxinl@cmu.edu}
\icmlcorrespondingauthor{Ding Zhao}{dingzhao@cmu.edu}

\icmlkeywords{Machine Learning, ICML}

\vskip 0.3in
]



\printAffiliationsAndNotice{}  

\begin{abstract}

Safe reinforcement learning (RL) aims to learn policies that satisfy certain constraints before deploying them to safety-critical applications.
Previous primal-dual style approaches suffer from instability issues and lack optimality guarantees. This paper overcomes the issues from the perspective of probabilistic inference. We introduce a novel Expectation-Maximization approach to naturally incorporate constraints during the policy learning: 1) a provable optimal non-parametric variational distribution could be computed in closed form after a convex optimization (E-step); 2) the policy parameter is improved within the trust region based on the optimal variational distribution (M-step).
The proposed algorithm decomposes the safe RL problem into a convex optimization phase and a supervised learning phase, which yields a more stable training performance.
A wide range of experiments on continuous robotic tasks shows that the proposed method achieves significantly better constraint satisfaction performance and better sample efficiency than baselines.
The code is available at \url{https://github.com/liuzuxin/cvpo-safe-rl}.

\end{abstract}

\section{Introduction}
\label{sec:intro}

The past few years have witnessed great success of reinforcement learning (RL)~\cite{mnih2013playing,silver2017mastering}. However, deploying a trained RL policy to the real world is challenging. One of the major obstacles is to ensure the learned policy satisfies safety constraints. Safe RL studies the RL problem subject to certain constraints, where the agent aims to not only maximize the task reward return, but also limit the constraint violation rate to a certain level. 
However, learning a parametrized policy that satisfies constraints is not a trivial task, especially when the policy is represented by black-box neural networks~\cite{liu2020constrained, as2022constrained, chen2021context}.

Many researchers have pointed out that prior knowledge of the environment~\cite{dalal2018safe,cheng2019end, thananjeyan2021recovery, chen2021context} and expert interventions~\cite{saunders2017trial,alshiekh2018safe,wagener2021safe} are helpful to improve safety, but the required domain knowledge on the dynamics, constraints, or the existence of an expert oracle may not always be available. 
This paper studies the safe RL problem in a more general setting: learning a safe policy purely from interacting with the environment and receiving constraint violation signals. We aim to unveil and resolve the fundamental issues for safe RL from the constrained optimization perspective.


Most constrained optimization approaches for safe RL are under the primal-dual framework, which transforms the original constrained problem into an unconstrained one by introducing the dual variables (i.e., Lagrange multipliers) to penalize constraint violations~\cite{chow2017risk,liang2018accelerated, tessler2018reward, bohez2019value, ray2019benchmarking}. 
However, the primal-dual iterative optimization may run into numerical \textit{instability} issues and lacks \textit{optimality} guarantee for each policy iteration~\cite{chow2018lyapunov, stooke2020responsive}. 
The instability usually comes from imbalanced learning rates of the primal and dual problem, and the optimality term means both \textit{feasibility} (constraint satisfaction) and reward maximization.
Another line of work approximate the constrained optimization problem with low-order Taylor expansions such that the dual variables could be solved efficiently~\cite{achiam2017constrained, yang2020projection, zhang2020first} , but the induced approximations errors may yield poor constraint satisfaction performance in practice~\cite{ray2019benchmarking}.
In addition, these policy-gradient algorithms are on-policy by design, and extending them to a more sample efficient off-policy setting is non-trivial. Note that in safe RL context, sample efficiency means using both minimum constraint violation costs and minimum interaction samples to achieve the same level of rewards.
As a result, a constrained RL optimization method that is 1) sample efficient, 2) stable and has 3) performance guarantees is absent in the literature.

To bridge the gap, this paper proposes the Constrained Variational Policy Optimization (CVPO) algorithm from the probabilistic inference view. CVPO transforms the safe RL problem to a convex optimization phase with optimality guarantees and a supervised learning phase with policy improvement bound and the robustness guarantee to recover to the feasible region, which ensure stable training performance. The off-policy implementation also gives rise to high sample-efficiency in practice.
The main contributions of this work are summarized as follows:
\begin{enumerate}
    \item To our best knowledge, this is the first work that formulates safe RL as a \textit{probabilistic inference} problem, which enables us to leverage the rich toolbox of inference techniques to solve the safe RL problem. These techniques are used to overcome many drawbacks of previous primal-dual policy optimization fashion, such as unstable training and lack of optimality guarantee.
    
    \item We propose a novel two-step  algorithm in an Expectation Maximization (EM) style to naturally incorporate safety constraints during the policy training. An optimal and feasible non-parametric variational distribution is solved \textit{analytically} during the E-step, and then the parametrized policy is trained via a \textit{supervised learning} fashion during the M-step, which allows us to improve the policy with off-policy data and increase the sample efficiency.

    \item We show the closed-form variational distribution in the E-step could be \textbf{computed efficiently} with \textbf{provable optimality guarantee}. The efficiency arises from the strict convexity of the dual problem in most cases, which ensures the uniqueness and optimality of the solution, and we did not find similar claims and proofs in the literature. Furthermore, the trust-region regularized policy improvement during the M-step gives us a worst-case constraint violation bound and robustness guarantee against worst-case training iterations.
    
    \item We evaluate CVPO on a series of continuous control tasks. The empirical experiments demonstrate the effectiveness of the proposed method -- more stable training, better constraint satisfaction, and up to 1000 times better sample efficiency than on-policy baselines.
\end{enumerate}

\section{Preliminary and Related Work}
\subsection{Constrained Markov Decision Processes}
\label{sec:cmdp}
Constrained Markov Decision Processes (CMDPs) provide a mathematical framework to describe the safe RL problem~\citep{altman1998constrained}, where the agents are enforced with restrictions on auxiliary safety constraint violation costs.
A CMDP is defined by a tuple $(\Scal, \Acal, P, r, \gamma, \rho_0, C)$, where $\Scal$ is the state space, $\Acal$ is the action space, $P:\Scal \times \Acal \times \Scal \xrightarrow{} [0, 1]$ is the transition kernel that specifies the transition probability $p(s_{t+1} | s_t, a_t)$ from state $s_t$ to $s_{t+1}$ under the action $a_t$, $r:\Scal \times \Acal \xrightarrow{} \mathbb{R}$ is the reward function, $\gamma \xrightarrow{} [0, 1)$ is the discount factor, and $\rho_0: \Scal \xrightarrow[]{} [0,1]$ is the distribution over the initial states. The last element $C$ is a set of costs $\{c_i:\Scal \times \Acal \xrightarrow{} \mathbb{R}_{\ge 0}, i=1,2,...,m\}$ for violating $m$ constraints, which is the major difference between CMDP and traditional Markov Decision Process (MDP). Depending on the application, the cost $c_i \in C$ has different representations and physical meanings. For instance, it could be an indicator cost signal for being in an unsafe set of states and actions, or a continuous function of the distance w.r.t the constraint boundary.
For simplicity of notation, we consider one universal constraint function $c \in C:\Scal \times \Acal \xrightarrow{} \mathbb{R}_{\ge 0}$ exists in the CMDP to characterize the corresponding constrained RL problem, which resembles the notation of reward function. All the definitions and theorems in this paper could be extended to multiple constraints as well. 

Let $\pi(a|s)$ denote the policy, and $\tau = \{s_0, a_0, ..., \}$ denote the trajectory. We use shorthand $r_t=r(s_t, a_t)$ and $c_t=c_t(s_t, a_t)$ for simplicity. The discounted expect return of the reward under the policy $\pi$ is $J_r(\pi) = \mathbb{E}_{\tau \sim \pi}[ \sum_{t=0}^\infty \gamma^t r_t ]$, and similarly the discounted expected return of the cost is $J_c(\pi) = \mathbb{E}_{\tau \sim \pi}[ \sum_{t=0}^\infty \gamma^t c_t ]$, where the initial state $s_0 \sim \rho_0$.
The objective of a safe RL problem is to find the policy $\pi^*$ that maximizes the expected cumulative rewards while limiting the costs incurred from constraint violations to a threshold $\epsilon_1 \in [0, +\infty)$:
\begin{equation}
   \pi^* = \arg\max_{\pi}  J_r(\pi), \quad s.t. \quad J_c(\pi)\leq \epsilon_1.
   \label{eq:cmdp_objective}
\end{equation}

\subsection{Related Work}
\label{sec:related}
\textbf{Constrained RL optimization.}
Primal-dual approach is most commonly used in solving safe RL problems~\cite{ding2020natural, bohez2019value}. Generally, the primal-dual style algorithms alternate between optimizing the policy parameters and updating the dual variables, which are usually performed via gradient descent~\cite{tessler2018reward, liang2018accelerated, zhang2020first}. \citet{stooke2020responsive} view the dual problem as a control system and propose a PID control method to update the Lagrange multipliers in a more stable way.
Though the primal-dual framework is intuitive, the trained policy
makes little safety guarantee with respect to both the converged policy and the behavior policy during each training iteration~\cite{chow2018lyapunov, xu2021crpo}. 
Several works introduce a KL-regularized policy improvement mechanism and provide the worse-case performance bound~\cite{achiam2017constrained, yang2020projection}, however, the quadratic approximation of the original problem usually lead to high cost~\cite{ray2019benchmarking}. 
In addition, the primal-dual approaches heavily rely on an accurate on-policy value estimation of constraints, so applying them to off-policy settings is not easy: one needs to backpropagate  gradients from multiple Q-value functions to the policy network, which may cause instability issues. 


\textbf{RL as inference.} Formulating RL as inference has been extensively studied recently~\cite{ levine2018reinforcement}.
\citet{haarnoja2017reinforcement, haarnoja2018soft} perform exact inference over the probabilistic graphical model of RL via message passing. \citet{abdolmaleki2018maximum, abdolmaleki2018relative} propose the Maximum a posteriori Policy Optimization (MPO) method to improve the policy with off-policy data in an Expectation-Maximization fashion, which can achieve comparable performance and better sample efficiency than on-policy methods. \citet{song2019v} extend MPO to on-policy setting, and \citet{fellows2019virel} propose a variational inference (VI) framework for RL.
While we believe the flourishing development of powerful RL as inference methods could provide a fresh view over the safe RL domain, however, a theoretical exploration of the connection between them has so far been lacking.

\section{Constrained Variational Policy Optimization (CVPO)}
We observe that under mild assumptions, safe RL could be viewed as a probabilistic inference problem, which yields CVPO --- a generic approach to incorporate safety constraints in the inference step. We will detail our method in this section and show how CVPO inherits many theoretical benefits from both the RL as inference and the constrained RL domain.
\subsection{Constrained RL as Inference}
\label{sec:safe_inference}
Before presenting the inference view, we first introduce the standard primal-dual perspective to solve CMDP, which transforms the objective (\ref{eq:cmdp_objective}) into a min-max optimization by introducing the Lagrange multiplier $\lambda$: 
\begin{align}
    (\pi^*, \lambda^*) = \arg \min_{\lambda \geq 0}\max_\pi J_r(\pi) - \lambda (J_c(\pi) -\epsilon_1 ).
\end{align}
The core principle of primal-dual approaches is to solve the min-max problem iteratively. However, the optimal dual variable $\lambda=+\infty$ when $J_c(\pi) > \epsilon_1$ and $\lambda=0$ when $J_c(\pi) < \epsilon_1$, so selecting a proper learning rate for $\lambda$ is critical. Approximately solving the minimization also leads to suboptimal dual variables for each iteration. In addition, the non-stationary cost penalty term involving $\lambda$ will make the policy gradient step in the primal problem hard to optimize, just as shown in the upper diagram of Fig.~\ref{fig:overall}.

\begin{figure}[!h]
\centering     
\includegraphics[width=0.9\linewidth]{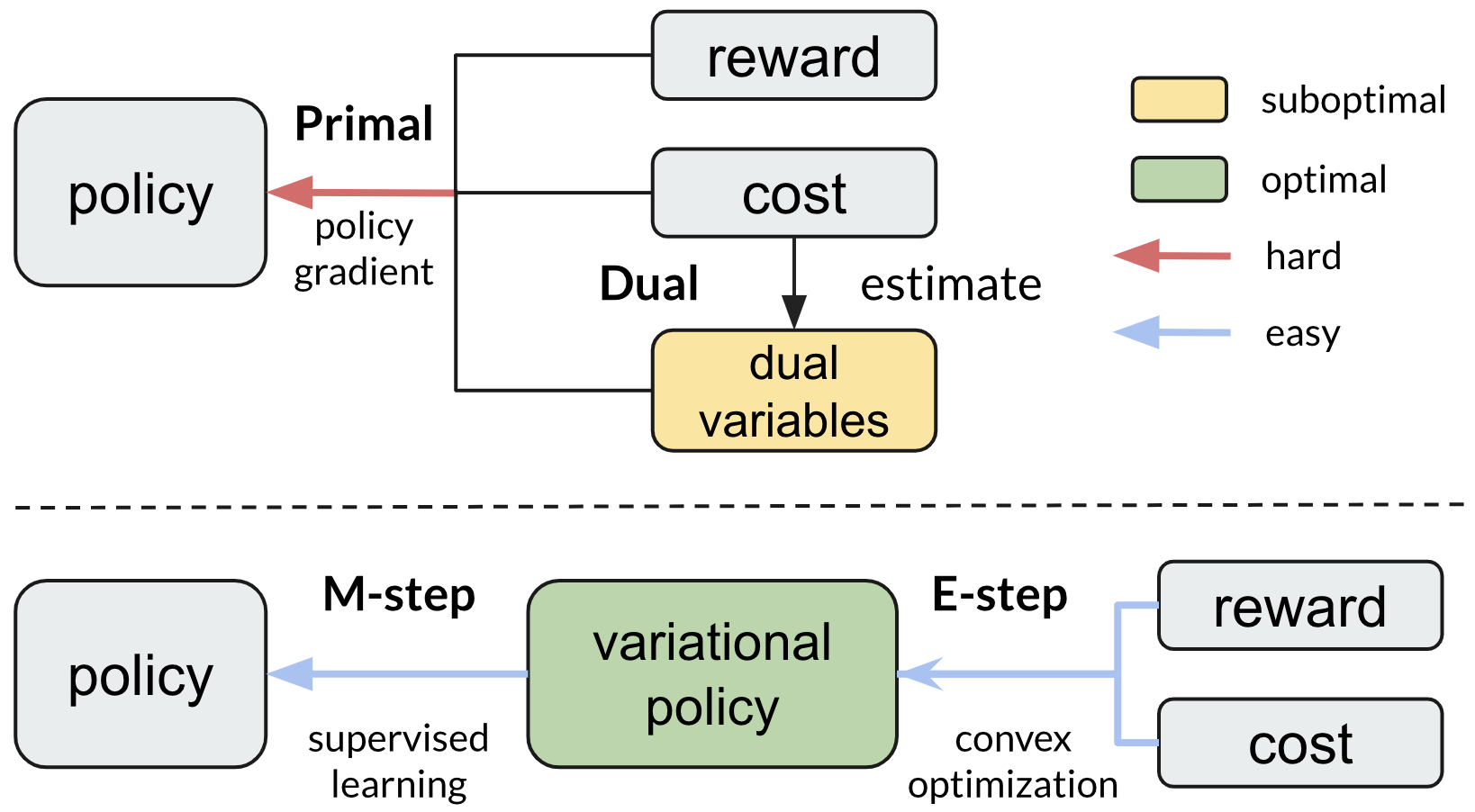}
\caption{Primal-dual view (upper) and inference view (lower).}
\label{fig:overall}
\end{figure}

To tackle the above problems, we view the safe RL problem from the probabilistic inference perspective --
inferring safe actions that result in “observed” high reward in states. This is done by introducing an optimality variable $O$ to represent the event of maximizing the reward.
Following similar probabilistic graphical models and notations in~\cite{levine2018reinforcement}, 
we consider an infinite discounted reward formulation, where the likelihood of being optimal given a trajectory is proportional to the exponential of the discounted cumulative reward:\linebreak  $p(O=1\mid \tau) \propto \exp(\sum_t \gamma^t r_t/\alpha)$, where $\alpha$ is a temperature parameter. Denote the probability of a trajectory $\tau$ under the policy $\pi$ as $p_\pi(\tau)$, then the lower bound of the log-likelihood of optimality under the policy $\pi$ is (see Appendix~\ref{app:elbo} for proof):
\begin{align}
    &\log  p_\pi  (O=1) = \log \int  p(O=1 \mid \tau)p_\pi(\tau) d\tau \\
    & \geq \E_{\tau \sim q} [\sum_{t=0}^\infty \gamma^t r_t] - \alpha \KL(q(\tau)  \Vert p_\pi(\tau)) = \Jcal(q,\pi) \label{eq:elbo}
\end{align}
where $q(\tau)$ is an auxiliary trajectory distribution and $\Jcal(q,\pi)$ is the evidence lower bound (ELBO).
To ensure safety, we limit the choices of $q(\tau)$ within a feasible distribution family that is subject to constraints. 
Recall that $c_t = c(s_t,a_t)$ is the cost for constraint violations. We define the \textbf{feasible distribution family} in terms of the threshold $\epsilon_1$ as
$$\Pi_{\Qcal}^{\epsilon_1}\coloneqq \{q(a|s) : \E_{\tau \sim q}[\sum_{t=0}^\infty \gamma^t c_t] < \epsilon_1, a \in \Acal, s \in \Scal\},$$ which is a set of all the state-conditioned action distributions that satisfy the safety constraint.
Afterwards, by factorizing the trajectory distributions 
$$
    q(\tau)  = p(s_0)\prod_{t\geq0}p(s_{t+1}|s_t, a_t) q(a_t|s_t), \forall q \in \Pi_\Qcal^{\epsilon_1},
$$
$$
    p_{\pi_\theta}(\tau)  = p(s_0)\prod_{t\geq0}p(s_{t+1}|s_t, a_t) \pi_\theta(a_t|s_t) p(\theta),
$$
where $\theta \in \Theta$ is the policy parameters, and $p(\theta)$ is a prior distribution, we obtain the following ELBO over the feasible state-conditioned action distribution $q(a|s)$ by cancelling the transitions:
\begin{equation}
\begin{split}
    \Jcal(q,\theta) =&
    \E_{\tau \sim q} \left[ \sum_{t=0}^\infty \left(\gamma^t r_t - \alpha\KL(q(\cdot| s_t)  \Vert \pi_\theta(\cdot|s_t))\right)\right] \\
    & + \log p(\theta), \quad \forall q(a|s_t) \in \Pi_\Qcal^{\epsilon_1}.
\end{split}
\label{eq:elbo_new}
\end{equation}
Optimizing the new lower bound $\Jcal(q,\theta)$ w.r.t $q$ within the feasible distribution space $\Pi_\Qcal^{\epsilon_1}$ (E-step) and $\pi$ within the parameter space $\Theta$ (M-step) iteratively via an Expectation-Maximization (EM) fashion yields the Constrained Variational Policy Optimization (CVPO) algorithm.
We could interpret the inference formulation as answering ``given future success in maximizing task rewards, what are the \textbf{feasible} actions most likely to have been taken?", instead of ``what are the actions that could maximize task rewards while satisfying the constraints?" in the primal-dual formulation.
The decoupling choice by separating the E-step and M-step provides more flexibility to optimizing the control policies and selecting their representations.
Similar idea is also adopted in previous works for standard RL setting~\cite{abdolmaleki2018maximum, abdolmaleki2018relative}.

Viewing safe RL as a variational inference problem has many benefits. As shown in the lower diagram of Fig. \ref{fig:overall}, we break the direct link between the inaccurate dual variable optimization and the difficult policy improvement and introduce a variational distribution in the middle to bridges the two steps, such that the policy improvement could be done via a much easier supervised learning fashion. The \textbf{key challenges} arise from the E-step because of the limitation of the variational distribution within a constrained set. However, we observe that the constrained $q$ could be solved analytically, efficiently, with optimality and feasibility guarantee through convex optimization, as we show next.


\subsection{Constrained E-step}
\label{sec:estep}
The objective of this step is to find the optimal variational distribution $q \in \Pi_{\Qcal}^{\epsilon_1}$ to improve the return of task reward, while satisfying the safety constraint.
At the $i\mhyphen th$ iteration, we resort to perform a partial constrained E-step to maximize $\Jcal(q,\theta)$ with respect to $q$ by fixing the policy parameters $\theta = \theta_i$. We set the initial value of $q = \pi_{\theta_i}$ such that the return of task reward $Q_r^q(s,a)$ and cost $Q_c^q(s,a)$ could be estimated by: \linebreak $Q_r^q(s,a) = Q_r^{\pi_{\theta_i}}(s,a) = \E_{\tau\sim\pi_{\theta_i}, s_0=s,a_0=a}\Big[\sum_{t=0}^\infty \gamma^t r_t\Big]$,
\linebreak $Q_c^q(s,a) = Q_c^{\pi_{\theta_i}}(s,a) = \E_{\tau\sim\pi_{\theta_i}, s_0=s,a_0=a}\Big[\sum_{t=0}^\infty \gamma^t c_t\Big]$, where the trajectory $\tau$ could be sampled from the replay buffer and thus the critics could be updated in an off-policy fashion. Given $\pi_{\theta_i}$, $Q_r^{\pi_{\theta_i}}(s,a)$ and $Q_c^{\pi_{\theta_i}}(s,a)$, we further optimize $q(\cdot|s)$ by the following KL regularized objective:
\begin{equation}
\begin{split}
    \Bar{\Jcal}(q) &= \E_{\rho_q}\Big[\E_{q(\cdot|s)} \big[ Q_r^{\pi_{\theta_i}}(s,a)\big] - \alpha \KL(q \Vert \pi_{\theta_i}) \Big], \\
    & s.t. \quad \E_{\rho_q}\Big[\E_{q(\cdot|s)} \big[ Q_c^{\pi_{\theta_i}}(s,a)\big] \Big] \leq \epsilon_1
\end{split}
    \label{eq:estep}
\end{equation}
where $\rho_q(s)$ is the stationary state distribution induced by $q(a|s)$ and $\rho_0$. The constraint ensures the optimized distribution is within the feasible set $\Pi_\Qcal^{\epsilon_1}$.
Solving the E-step (\ref{eq:estep}) could be regarded as a KL-regularized constrained optimization problem. However, since the expected reward return term $\E_{q(\cdot|s)} \big[ Q_r^{\pi_{\theta_i}}(s,a)\big]$ could be on an arbitrary scale, it is hard to choose a proper penalty coefficient $\alpha$ of the KL regularizer for different CMDP settings. Therefore, we impose a hard constraint $\epsilon_2$ on the KL divergence between the non-parametric distribution $q(a|s)$ that to be optimized and the parametrized policy $\pi_{\theta_i}(a|s)$. Then the E-step yields the following constrained optimization:
\begin{equation}
\begin{split}
   \max_{q} & \quad \E_{\rho_q}\Big[ \int q(a|s) Q_r^{\pi_{\theta_i}}(s,a) da \Big]\\
    s.t. & \quad \E_{\rho_q}\Big[\int q(a|s) Q_c^{\pi_{\theta_i}}(s,a) da \Big] \leq \epsilon_1 \\
    & \quad \E_{\rho_q}\Big[\KL(q(a|s)\Vert \pi_{\theta_i}) \Big] \leq \epsilon_2 \\
    & \quad \int q(a | s) da = 1, \quad \forall s \sim \rho_q
\end{split}
\label{eq:estep_objective}
\end{equation}
where we have three constraints for this optimization problem in total. The \textbf{first safety constraint} in terms of $\epsilon_1$ is to ensure the optimal non-parametric distribution belongs to the feasible set such that the safety constraints of CMDP could be satisfied. The \textbf{second regularization constraint} in terms of the KL threshold $\epsilon_2$ aims to restrict the optimized variational distribution $q$ within a trust region of the old policy distribution. 
The \textbf{last equality constraint} is to make sure the solved $q$ is a valid action distribution across all the states. Intuitively, in E-step, we aim to find the optimal variational distribution that 1) maximizes the task rewards, 2) belongs to the feasible distribution family $\Pi_\Qcal^{\epsilon_1}$, and 3) stays within the trust region of the old policy. 

Before solving the constrained optimization problem~(\ref{eq:estep_objective}), we need to specify the representation of the variational distribution $q$. 
The inference formulation of the safe RL problem gives us the flexibility to choose its representation freely.
Note that if we use a parametric representation of $q(a|s)$, then the E-step is similar to the policy updating in CPO, where we could regard optimizing $q$ as updating the policy parameters from $\theta_i$ to $\theta_{i+1}$. 
As such, the M-step is no longer required. However, as shown in CPO, the constrained optimization problem with a parametrized $q$ is generally intractable, so approximations over the parameter space are usually required, which may lead to poor constraint satisfaction~\cite{ray2019benchmarking}.

To avoid the performance degradation induced by approximation errors, we choose a non-parametric form for $q(a|s)$. 
Namely, given a state $s$, we use $|\Acal|$ variables for $q(\cdot|s)$ if the action space is finite. 
Otherwise, we sample $K$ particles within the action space to represent the variational distribution for the continuous action space. 
By constructing a non-parametric form of $q$, we could see that problem (\ref{eq:estep_objective}) is a convex problem since the objective is linear and all the constraints are convex. Moreover, we could obtain the optimal (and mostly unique) solution in an analytical form (\ref{eq:optimalq_theorem}) after solving a convex dual problem. We show the strong duality guarantee under mild assumptions (see Appendix~\ref{app:dual_function}):
\begin{Assumption}
(Slater's condition). There exists a feasible distribution $\Bar{q} \in \Pi_\Qcal^{\epsilon_1}$ within the trust region of the old policy $\pi_{\theta_i}$: $\KL(\Bar{q}\Vert \pi_{\theta_i}) < \epsilon_2$.
\label{assumption:slater}
\end{Assumption}
\begin{Theorem}
  If assumption \ref{assumption:slater} holds, then the optimal variational distribution within $\Pi_\Qcal^{\epsilon_1}$ for problem (\ref{eq:estep_objective}) has the form:
 \begin{align}
    q^*_i(a|s) = \frac{\pi_{\theta_i}(a|s)}{Z(s)} \exp\left(\frac{Q_r^{\theta_i}(s,a) - \lambda^* Q_c^{\theta_i}(s,a)}{\eta^*}\right)
    \label{eq:optimalq_theorem}
\end{align}
where $Z(s)$ is a constant normalizer to make sure $q^*$ is a valid distribution, and the dual variables $\eta^*$ and $\lambda^*$ are the solutions of the following convex optimization problem:
\begin{equation}
\begin{split}
    & \min_{\lambda, \eta \geq 0} \quad  g(\eta, \lambda) = \lambda \epsilon_1 + \eta \epsilon_2 + \\
    & \eta \E_{\rho_q} \left[ \log \E_{\pi_{\theta_i}} \Big[ \exp\left(\frac{Q_r^{\theta_i}(s,a) - \lambda Q_c^{\theta_i}(s,a)}{\eta}\right) \Big] \right].
\end{split}
\label{eq:dual_opt}
\end{equation}
\label{theorem:1}
\end{Theorem}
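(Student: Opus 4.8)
The plan is to recognize problem~(\ref{eq:estep_objective}) as a finite-dimensional convex program and attack it with Lagrangian duality. Using the non-parametric representation of $q$ (either $|\Acal|$ variables per state in the discrete case or $K$ particles in the continuous case) and treating the state distribution $\rho_q$ as fixed, i.e.\ sampled from the replay buffer as in the MPO line of work, the objective is linear in $q$, the cost and normalization constraints are linear, and the $\KL$ constraint is convex. I would introduce a multiplier $\lambda \ge 0$ for the safety constraint, $\eta \ge 0$ for the trust-region $\KL$ constraint, and a per-state multiplier $\mu(s)$ for the normalization equality $\int q(a|s)\,da = 1$, and write the Lagrangian
\begin{equation*}
L(q,\eta,\lambda) = \E_{\rho_q}\!\Big[\!\int\! q(a|s)\big(Q_r^{\theta_i} - \lambda Q_c^{\theta_i}\big)\,da\Big] + \lambda\epsilon_1 + \eta\epsilon_2 - \eta\,\E_{\rho_q}\big[\KL(q\Vert\pi_{\theta_i})\big].
\end{equation*}
Since Assumption~\ref{assumption:slater} (Slater's condition) supplies a strictly feasible point, strong duality holds and I may exchange the order of $\max_q$ and $\min_{\eta,\lambda\ge0}$.

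For fixed dual variables the inner maximization decouples across states, so I would maximize $L$ pointwise in $s$. Taking the derivative with respect to $q(a|s)$ and imposing normalization yields the stationarity condition
\begin{equation*}
Q_r^{\theta_i}(s,a) - \lambda Q_c^{\theta_i}(s,a) - \eta\Big(\log\tfrac{q(a|s)}{\pi_{\theta_i}(a|s)} + 1\Big) - \mu(s) = 0,
\end{equation*}
which, after solving for $q$ and using $\int q(a|s)\,da = 1$ to eliminate $\mu(s)$, gives exactly the Boltzmann-type form~(\ref{eq:optimalq_theorem}) with $Z(s) = \E_{\pi_{\theta_i}}\big[\exp((Q_r^{\theta_i} - \lambda Q_c^{\theta_i})/\eta)\big]$.

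To obtain the dual problem~(\ref{eq:dual_opt}), I would substitute $q^*_i$ back into $L$. The key simplification is that $\log\tfrac{q^*_i}{\pi_{\theta_i}} = (Q_r^{\theta_i} - \lambda Q_c^{\theta_i})/\eta - \log Z(s)$, so the $\eta\,\KL$ term exactly cancels the linear reward--cost expectation, leaving only $\lambda\epsilon_1 + \eta\epsilon_2 + \eta\,\E_{\rho_q}[\log Z(s)]$, which is precisely $g(\eta,\lambda)$. Convexity of $g$ is then automatic, since the dual of any program is a pointwise supremum of affine maps; one can further verify strict convexity via the log-sum-exp structure in non-degenerate cases to argue uniqueness of $(\eta^*,\lambda^*)$ and hence of $q^*_i$.

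The main obstacle is justifying strong duality cleanly rather than the calculus. Two points require care: first, $\rho_q$ genuinely depends on $q$, so to keep the program convex I must fix $\rho_q$ to the sampling distribution (the standard device in this literature), after which the per-state subproblems are honestly convex; second, I must check that Slater's condition is exactly what is needed---a strictly feasible $\bar q$ with $\KL(\bar q\Vert\pi_{\theta_i}) < \epsilon_2$ inside $\Pi_\Qcal^{\epsilon_1}$---to rule out a duality gap and to guarantee that the dual optimum is attained at finite $(\eta^*,\lambda^*)$ with $\eta^* > 0$, which is what makes the expression~(\ref{eq:optimalq_theorem}) well defined.
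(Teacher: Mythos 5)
Your proposal follows essentially the same route as the paper's own proof: form the Lagrangian with multipliers for the safety, KL, and normalization constraints, set the stationarity condition in $q$ to zero to obtain the exponentiated-advantage form, eliminate the normalization multiplier via $\int q\,da=1$, substitute back to recover the dual $g(\eta,\lambda)$, and invoke Slater's condition for strong duality. Your added remarks on fixing $\rho_q$ to the sampling distribution and on needing $\eta^*>0$ are sensible refinements of points the paper treats implicitly, but they do not change the argument.
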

Theorem \ref{theorem:1} suggests the non-parametric variational distribution could be easily solved in close-form with optimality guarantee, since Assumption \ref{assumption:slater} ensures zero duality gap.
Eq. (\ref{eq:optimalq_theorem}) indicates that the optimal $q$ is re-weighted based on the old policy $\pi_{\theta_i}$, where the weights are controlled by $Q_r^{\theta_i}(s,a), Q_c^{\theta_i}(s,a), \eta, \lambda$. Note that the $Q_r^{\theta_i}(s,a)$ and $Q_c^{\theta_i}(s,a)$ are viewed as constants here as discussed in section~\ref{sec:safe_inference}.
We can see that higher weights are given to the actions that have higher future task rewards and lower safety costs, where the weight between them is balanced by $\lambda$. 
Intuitively, the dual variable $\eta$ serves as a temperature to control the flatness of the weights, such that the updated variational distribution would not be far away from the old policy or collapse to one action quickly. Higher $\eta$ enforces stronger restrictions on the flatness, which makes sense since we limit the solution within a trust region. 
One exciting property of Theorem \ref{theorem:1} is that we prove the strong convexity conditions of the dual problem, which guarantees the optimality and uniqueness of the solution, as shown below:
\begin{Theorem}
The multivariate dual function $g(\eta, \lambda)$ in (\ref{eq:dual_opt}) is convex on $\R^2_{\geq0}$. It is strictly convex and has a unique optimal solution when (1) $Q_r^{\theta_i}(s,\cdot), Q_c^{\theta_i}(s,\cdot)$ are not constant functions; (2) $\forall C\in\mathbb{R}, \exists a_0, s.t.  Q_r^{\theta_i}(s,a_0)\neq C\cdot Q_c^{\theta_i}(s,a_0)$; and (3) $\lambda < +\infty$. When the Slater condition in Assumption \ref{assumption:slater} holds, at least one optimal solution exists.
\label{theorem:dual_convex}
\end{Theorem}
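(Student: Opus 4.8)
The plan is to isolate the only nontrivial part of $g$ and reduce the two-dimensional question to a family of per-state problems. Since $\lambda\epsilon_1 + \eta\epsilon_2$ is affine in $(\eta,\lambda)$ it is simultaneously convex and concave and may be ignored for all convexity and strict-convexity considerations; and since $\rho_q$ enters only as a fixed sampling distribution over states (the state visitation is held fixed during the E-step, exactly as $q$ is initialised at $\pi_{\theta_i}$), the outer $\E_{\rho_q}[\cdot]$ is a nonnegative mixture that preserves both convexity and strict convexity whenever the integrand is (strictly) convex for the relevant states. Writing $X \coloneqq Q_r^{\theta_i}(s,a)$ and $Y \coloneqq Q_c^{\theta_i}(s,a)$ as functions of $a\sim\pi_{\theta_i}(\cdot\mid s)$, it therefore suffices to analyse, for each fixed $s$, the per-state term $h_s(\eta,\lambda) \coloneqq \eta\log\E_{\pi_{\theta_i}}\big[\exp((X-\lambda Y)/\eta)\big]$.

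For convexity I will recognise $h_s$ as a perspective of a cumulant generating function. Define $F(u,v) \coloneqq \log\E_{\pi_{\theta_i}}[\exp(uX+vY)]$, the log-moment-generating function of $(X,Y)$ under $\pi_{\theta_i}$; by H\"older's inequality $F$ is convex on $\R^2$. Then $h_s(\eta,\lambda) = \eta\,F(1/\eta,-\lambda/\eta)$ is exactly the perspective $P_F(t,\vz)=tF(\vz/t)$ of $F$ evaluated at $t=\eta>0$ and $\vz=(1,-\lambda)$. The perspective of a convex function is jointly convex on $\{t>0\}$, and $(\eta,\lambda)\mapsto(\eta,(1,-\lambda))$ is affine, so $h_s$ is convex; averaging over $s$ then gives convexity of $g$ on $\{\eta>0\}$, which extends to $\R^2_{\geq0}$ by taking the closure at $\eta\to0^+$ (condition (3), $\lambda<+\infty$, keeps us in the regime where the expression is finite).

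For strict convexity I will compute the Hessian of the perspective and show it is positive definite in the admissible directions. A direct computation (equivalently, the standard perspective-Hessian identity) shows that the quadratic form of $\nabla^2 h_s$ in a direction $(d_\eta,d_\lambda)$ equals $\tfrac{1}{\eta}\,w^\top \Cov_{q^*}(X,Y)\,w$, where $\Cov_{q^*}$ is the covariance of $(X,Y)$ under the tilted (Gibbs) distribution $q^*_{\eta,\lambda}\propto \pi_{\theta_i}\exp((X-\lambda Y)/\eta)$ of Theorem~\ref{theorem:1}, and $w=\big(-d_\eta/\eta,\ \lambda d_\eta/\eta - d_\lambda\big)$. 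The key point is that the perspective is positively homogeneous and hence linear along its radial direction, but fixing the first argument to the constant $1$ (rather than a free variable) removes that degeneracy: $w=0$ forces $d_\eta=0$ and then $d_\lambda=0$. Thus $h_s$ is strictly convex at $s$ precisely when $\Cov_{q^*}(X,Y)\succ0$, i.e. when $X$ and $Y$ are not affinely degenerate under $q^*$. Conditions (1) (neither $Q_r^{\theta_i}(s,\cdot)$ nor $Q_c^{\theta_i}(s,\cdot)$ constant, giving $\Var_{q^*}(X),\Var_{q^*}(Y)>0$) and (2) (no pure scaling $Q_r^{\theta_i}=C\,Q_c^{\theta_i}$) rule out the rank-deficient cases, so the $2\times2$ covariance is nonsingular and $\nabla^2 h_s\succ0$; integrating against $\rho_q$ then yields strict convexity of $g$.

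Finally, existence of a minimiser follows from Slater's condition (Assumption~\ref{assumption:slater}). Because the primal problem (\ref{eq:estep_objective}) is convex and strictly feasible, the associated set of KKT multipliers --- equivalently the set of dual optima $(\eta^*,\lambda^*)$ --- is nonempty, convex and compact, so the infimum of $g$ over $\{\eta,\lambda\ge0\}$ is attained, and strong duality (established for Theorem~\ref{theorem:1}) guarantees this value matches the primal optimum. I expect the main obstacle to be the strict-convexity step: carefully justifying the perspective-Hessian identity, pinning down the exact affine-nondegeneracy condition on $(Q_r^{\theta_i},Q_c^{\theta_i})$ under the tilted measure (the intercept case $Q_r^{\theta_i}=C\,Q_c^{\theta_i}+D$ is singular at a single state and must be excluded or repaired by the averaging over $\rho_q$), and verifying that the inherent radial null space of the perspective is annihilated by the affine restriction $u\equiv1$ rather than reappearing among the $(\eta,\lambda)$ directions.
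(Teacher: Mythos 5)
Your proof is correct, but it takes a genuinely different route from the paper's. The paper performs the same per-state reduction you do, but then proceeds by brute force: it writes out the three second partial derivatives of $\Bar{g}$ explicitly, uses Lagrange's identity to recast the variance-type numerators as double integrals, applies Cauchy--Schwarz to establish $\rmH_{1,1}\geq 0$ and $\rmH_{1,1}\rmH_{2,2}-\rmH_{1,2}^2\geq 0$, and then reads off the equality case of Cauchy--Schwarz to characterize degeneracy. You instead recognize the per-state term as the perspective $\eta F(1/\eta,-\lambda/\eta)$ of the cumulant generating function $F$ of $(Q_r^{\theta_i},Q_c^{\theta_i})$ under $\pi_{\theta_i}$, which makes convexity immediate (convexity of $F$ by H\"older, perspective plus affine precomposition), and reduces strict convexity to positive definiteness of $\Cov_{q^*}(Q_r^{\theta_i},Q_c^{\theta_i})$ under the tilted distribution of Theorem~1 --- with the radial null direction of the perspective killed by freezing the first argument at $1$. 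The paper's argument is more elementary and self-contained; yours is shorter, exposes the exponential-tilting structure (the Hessian is a covariance under $q^*$), and makes the degeneracy condition transparent as affine dependence of $Q_r^{\theta_i}$ and $Q_c^{\theta_i}$. Notably, your flagged concern about the intercept case $Q_r^{\theta_i}=C\,Q_c^{\theta_i}+D$ is well founded and is in fact visible in the paper's own derivation: its equality condition (the displayed identity requiring $\bigl(Q_r(s,a_1)-Q_r(s,a_2)\bigr)/\bigl(Q_c(s,a_1)-Q_c(s,a_2)\bigr)$ to be constant) is precisely affine dependence, yet the paper's summary and the theorem's condition (2) state only the proportional case $Q_r=C\,Q_c$; both proofs agree on the substance, and your formulation makes the needed strengthening of condition (2) explicit rather than leaving it implicit. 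Your existence argument (nonemptiness and compactness of the dual optimal set under Slater's condition) is also sound and matches what the paper obtains via its strong-duality lemma.
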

Proof and discussions are in Appendix \ref{app:convex}. 
Each condition has corresponding meaning, as we explain in the following remarks.

\begin{Remark}
For the first condition, if $Q_r(s,\cdot)$ is a constant function, then the objective in (\ref{eq:estep_objective}) will always be a constant --- the optimization becomes meaningless since all the distributions that within the trust region should be the same; if $Q_c(s,\cdot)$ is a constant function, then the safety constraint will be inactive, since no policy could change the feasibility status. 
In addition, if $Q_c(s,\cdot)$ is a constant, the gradient of the safety constraint dual variable $\lambda$ will either be a positive constant when the problem is feasible (tends to make $\lambda\xrightarrow[]{} 0$) or a negative constant when the problem is infeasible (tends to make $\lambda\xrightarrow[]{} +\infty$). 
\end{Remark}
\begin{Remark}
The second condition indicates that if the task reward $Q_r$ value is proportional to the cost $Q_c$ value everywhere, then multiple optimal solutions may exist on the KL constraint boundary or the safety constraint boundary. 
Intuitively, the problem becomes a bounded convex optimization --- maximizing the risks $\E_q[Q_c(s,a)]$ within the trust region defined by $\epsilon_2$ and a ball defined by $\epsilon_1$. 
Though the solutions may not be unique, the optimality could still be guaranteed as long as the Slater's condition \ref{assumption:slater} holds.
\end{Remark}
\begin{Remark}
The third condition is equivalent to the violation of our Slater's condition assumption \ref{assumption:slater} --- no distribution exists within the trust region that satisfies the safety constraints. In that sense, the gradient of the dual variable $\lambda$ will always be negative, and thus $\lambda \xrightarrow[]{} +\infty$, which tends to impose huge penalties on safety critics $Q_c$. 
In practice, this rarely happens since we could select a large KL threshold for E-step and choose a much smaller trust region size for M-step, such that $n$-step robustness could be guaranteed and the Slater's condition could be easily met, as we show in Appendix \ref{app:n_step_robustness}.
\end{Remark}

From Theorem \ref{theorem:dual_convex} and above remarks, we can see the conditions for strict convexity are easy to be satisfied: the Q value functions should not be constants and the reward return will not be proportional to the cost return in most safe RL settings. 
The strict convexity analysis guarantees the uniqueness of the optimal solution in most situations and provides some other theoretical implications of the E-M procedure, such as the convergence to a stationary point \cite{sriperumbudur2009integral}.
Furthermore, the Slater condition indicates that $\lambda < +\infty$ and the existence of an optimal solution, which ensures the dual problem could be solved efficiently via any convex optimization tools. 
We believe this finding is original and non-trivial, which also lays the theoretical foundation of the outstanding empirical performance of our method, as we show in Sec.~\ref{sec:experiment}. 


\subsection{M-step}
\label{sec:mstep}
After E-step, we obtain an optimal feasible variational distribution $q^*_i(\cdot|s)$ for each state by solving (\ref{eq:estep_objective}). In M-step, we aim to improve the ELBO (\ref{eq:elbo_new}) w.r.t the policy parameter $\theta$.
By dropping the terms in Eq. (\ref{eq:elbo_new}) that are independent of $\theta$, we obtain the following M-step objective:
\begin{equation}
\begin{split}
    \Bar{\Jcal}(\theta) = \E_{\rho_q}\Big[\alpha \E_{q^*_i(\cdot|s)} \big[ \log \pi_\theta (a|s)\big]\Big] + \log p(\theta)
\end{split}
\label{eq:mstep}
\end{equation}
where $\alpha$ is a temperature parameter to balance the weight between likelihood and prior, and $q^*_i(\cdot|s)$ serve as the weights for the samples $\pi_\theta$. Note that the objective could be viewed as a weighted maximum a-posteriori problem, which is similar to MPO~\cite{abdolmaleki2018maximum,abdolmaleki2018relative}. Since optimizing $\Bar{\Jcal}(\theta)$ is a supervised learning problem, we could use any prior $p(\theta)$ to regularize the policy and use any optimization methods to solve~(\ref{eq:mstep}), depending on the policy parametrization. 
Particularly, we adopt a Gaussian prior around the old policy parameter $\theta_i$ in this paper: $\theta \sim \Ncal(\theta_i, \frac{ F_{\theta_i}}{\alpha\beta})$, where $F_{\theta_i}$ is the Fisher information matrix and $\beta$ is a positive constant. With this Gaussian prior, we obtain the following generalized M-step (see Appendix~\ref{app:mstep} for detailed proof):
\begin{equation}
\begin{split}
    \max_\theta \E_{\rho_q}\Big[ \E_{q^*_i(\cdot|s)} \big[ \log \pi_\theta (a|s)\big] - \beta \KL(\pi_{\theta_i}\Vert \pi_{\theta}) \Big].
\end{split}
\end{equation}
Similar to the E-step, we convert the soft KL regularizer to a hard KL constraint to deal with different objective scales:
\begin{equation}
\begin{split}
    &\max_\theta \quad \E_{\rho_q}\Big[ \E_{q^*_i(\cdot|s)} \big[ \log \pi_\theta (a|s)\big]\Big] \\
    & s.t. \quad \E_{\rho_q}\big[ \KL(\pi_{\theta_i}(a|s)\Vert \pi_{\theta}(a|s)) \big] \leq \epsilon.
\end{split}
\label{eq:m_step_kl}
\end{equation}
The regularizer is important to improve the updating robustness and prevent the policy from overfitting, as we will show in the experiment section~\ref{sec:ablation_study}.

\subsection{Theoretical Analysis}
\label{sec:theory_bound}
CVPO updates the policy by maximizing KL-regularized objective functions in an EM manner, which brings it two advantages over primal-dual methods -- the ELBO improvement guarantee and the worst-case constraint satisfaction bound (see Appendix~\ref{app:ELBO_improve} \&~\ref{app:worst_case} for details).

\begin{Proposition}
  Suppose $\pi_{\theta_{i-1}}, \pi_{\theta_{i}}$ satisfy the Slater's condition, then the ELBO in Eq. (\ref{eq:elbo_new}) is guaranteed to be non-decreasing: $J(q_i, \theta_{i+1})\geq J(q_{i-1}, \theta_{i})$.
\label{Proposition:improvement}
\end{Proposition}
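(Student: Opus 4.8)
The plan is to prove the claim by the standard two-step monotonicity argument for Expectation–Maximization, taking the full ELBO $\Jcal(q,\theta)$ of Eq. (\ref{eq:elbo_new}) as the common objective that both steps improve. Writing one iteration as the E-step $q_i = \argmax_{q \in \Pi_\Qcal^{\epsilon_1}} \Jcal(q,\theta_i)$ (restricted to the trust region around $\pi_{\theta_i}$) followed by the M-step $\theta_{i+1} = \argmax_{\theta} \Jcal(q_i,\theta)$ (within its own trust region), the target inequality factors through the intermediate value $\Jcal(q_i,\theta_i)$ as
\begin{equation*}
\Jcal(q_i,\theta_{i+1}) \geq \Jcal(q_i,\theta_i) \geq \Jcal(q_{i-1},\theta_i),
\end{equation*}
so the whole proof reduces to verifying these two one-sided improvements and then telescoping.

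The M-step inequality $\Jcal(q_i,\theta_{i+1}) \geq \Jcal(q_i,\theta_i)$ is the easy half. By definition $\theta_{i+1}$ maximizes $\theta \mapsto \Jcal(q_i,\theta)$ over the M-step trust region (\ref{eq:m_step_kl}), and $\theta_i$ is a feasible competitor for that problem since $\KL(\pi_{\theta_i}\Vert\pi_{\theta_i}) = 0 \leq \epsilon$; hence the maximizer cannot do worse than $\theta_i$. I would spell out that dropping the $\theta$-independent reward-expectation terms (exactly the passage from (\ref{eq:elbo_new}) to (\ref{eq:mstep})) leaves the comparison unchanged, so optimality of $\theta_{i+1}$ for the M-step surrogate transfers to the full ELBO with $q = q_i$ held fixed.

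For the E-step inequality $\Jcal(q_i,\theta_i) \geq \Jcal(q_{i-1},\theta_i)$, I would first invoke that $q_i$ solves the constrained E-step (\ref{eq:estep_objective}) \emph{exactly}: Assumption \ref{assumption:slater} at $\pi_{\theta_i}$ gives zero duality gap via Theorem \ref{theorem:1}, so the closed-form $q_i^*$ is a genuine maximizer of the (KL-regularized) objective over the feasible set, not merely a stationary point. It then suffices to exhibit $q_{i-1}$ as an admissible competitor at iteration $i$, i.e. to check that $q_{i-1}$ still lies in $\Pi_\Qcal^{\epsilon_1}$ and inside the trust region $\KL(\,\cdot\,\Vert\pi_{\theta_i}) \leq \epsilon_2$ measured against the \emph{new} policy.

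This last feasibility check is where I expect the real work to be, and it is the reason both $\pi_{\theta_{i-1}}$ and $\pi_{\theta_i}$ are assumed to satisfy Slater's condition rather than $\pi_{\theta_i}$ alone. The trust-region membership of $q_{i-1}$ was certified against $\pi_{\theta_{i-1}}$, not against $\pi_{\theta_i}$, so I would control the drift by using that the M-step moves the policy only within its deliberately smaller trust region, $\KL(\pi_{\theta_{i-1}}\Vert\pi_{\theta_i}) \leq \epsilon$, together with the generous E-step budget $\epsilon_2 \gg \epsilon$. This is precisely the nested-trust-region mechanism behind the $n$-step robustness guarantee of Appendix \ref{app:n_step_robustness}: Slater at $\pi_{\theta_{i-1}}$ ensures $q_{i-1}$ was well-defined and feasible to begin with, while Slater at $\pi_{\theta_i}$ ensures the competitor survives the (slightly shifted) feasible region so the comparison is legitimate. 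Making the ``$\epsilon_2$ large, $\epsilon$ small'' slack quantitative — so that the previous solution provably remains admissible after the policy update, despite $\KL$ lacking a triangle inequality — is the main obstacle; the remaining EM telescoping above is routine.
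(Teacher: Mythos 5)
Your overall architecture is exactly the paper's: the standard EM telescoping $\Jcal(q_i,\theta_{i+1}) \geq \Jcal(q_i,\theta_i) \geq \Jcal(q_{i-1},\theta_i)$, with the M-step half discharged by noting $\theta_i$ is a feasible competitor, and the E-step half by exhibiting $q_{i-1}$ as an admissible competitor whose feasibility comes from Slater's condition at $\pi_{\theta_{i-1}}$ and whose domination by $q_i$ comes from Slater's condition at $\pi_{\theta_i}$ via Theorem~\ref{theorem:1}. So far this matches the paper line for line.

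The problem is the step you yourself flag as ``the main obstacle'' and then do not close: verifying that $q_{i-1}$ still satisfies $\KL(q_{i-1}\Vert\pi_{\theta_i}) \leq \epsilon_2$ after the policy has moved. As you note, KL has no triangle inequality, so ``$\epsilon_2 \gg \epsilon$'' does not by itself yield this, and the quantitative argument you defer (which would resemble the Lambert-$W$ computation in Appendix~\ref{app:n_step_robustness}, and only for Gaussian policies) is left unproven. This obligation is an artifact of your choice to measure the E-step improvement against the hard-constrained problem (\ref{eq:estep_objective}). The paper instead measures improvement directly in the soft-KL ELBO of Eq.~(\ref{eq:elbo_new}) -- i.e., the regularized form (\ref{eq:estep}) -- and writes $q_i = \argmax_{q\in\Pi_\Qcal^{\epsilon_1}} \Jcal(q,\theta_i)$, where the \emph{only} admissibility requirement on the competitor is cost-feasibility $q_{i-1}\in\Pi_\Qcal^{\epsilon_1}$, which Slater's condition at $\pi_{\theta_{i-1}}$ delivers through Theorem~\ref{theorem:1}; no trust-region membership of $q_{i-1}$ relative to the new policy is ever needed. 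If you adopt that formulation your argument closes immediately and coincides with the paper's; if you insist on the hard-constrained E-step, you owe either the quantitative KL-drift bound or an explicit reduction to the penalized problem (the optimal dual $\eta^*$ playing the role of $\alpha$), neither of which is currently in your write-up.
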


Proposition \ref{Proposition:improvement} provides the policy improvement guarantee for the reward. Regarding the constraint violation cost, we have the following bound holds for CVPO:
\begin{Proposition}
  Suppose $\pi_{\theta_{i}} \in \Pi_\Qcal^{\epsilon_1}$. $\pi_{\theta_{i+1}}$ and $\pi_{\theta_{i}}$ are related by the M-step (\ref{eq:m_step_kl}), then we have the upper bound:
\begin{equation}
    J_c(\pi_{\theta_{i+1}}) \leq \epsilon_1 + \frac{[(1-\gamma)+\sqrt{2\epsilon}\gamma]\delta_c^{\pi_{\theta_{i+1}}}}{(1-\gamma)^2}
\end{equation}
where $\delta_{c}^{\pi_{\theta_{i+1}}} = \max_s \vert \E_{a\sim \pi_{\theta_{i+1}}}[A_c^{\theta_i}(s,a)]\vert$, $A_c^{\theta_i}(s,a)$ is the cost advantage function of $\pi_{\theta_i}$.
\label{Proposition:cost_bound}
\end{Proposition}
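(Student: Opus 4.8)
The plan is to reduce the statement to a trust-region performance bound of the CPO type \cite{achiam2017constrained}, applied to the cost objective $J_c$ rather than to the reward. The starting observation is that the hypothesis $\pi_{\theta_i}\in\Pi_\Qcal^{\epsilon_1}$ gives $J_c(\pi_{\theta_i})\le\epsilon_1$, so it suffices to bound the one-step change $J_c(\pi_{\theta_{i+1}})-J_c(\pi_{\theta_i})$ by the remaining term. First I would write the exact performance-difference identity for the cost, namely $J_c(\pi_{\theta_{i+1}})-J_c(\pi_{\theta_i})=\frac{1}{1-\gamma}\E_{s\sim d^{\pi_{\theta_{i+1}}},\,a\sim\pi_{\theta_{i+1}}}[A_c^{\theta_i}(s,a)]$, where $d^{\pi_{\theta_{i+1}}}$ denotes the normalized discounted state-visitation distribution of the new policy and $A_c^{\theta_i}$ is the cost advantage of the old policy.

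Next I would replace the visitation distribution $d^{\pi_{\theta_{i+1}}}$ by that of the old policy, $d^{\pi_{\theta_i}}$, incurring an error controlled by how far the two policies are; following the CPO decomposition this yields
\[
\begin{aligned}
J_c(\pi_{\theta_{i+1}})-J_c(\pi_{\theta_i}) &\le \tfrac{1}{1-\gamma}\E_{s\sim d^{\pi_{\theta_i}},\,a\sim\pi_{\theta_{i+1}}}[A_c^{\theta_i}(s,a)] \\
&\quad + \tfrac{2\gamma\,\delta_c^{\pi_{\theta_{i+1}}}}{(1-\gamma)^2}\,\E_{s\sim d^{\pi_{\theta_i}}}\big[D_{\mathrm{TV}}(\pi_{\theta_{i+1}}\Vert\pi_{\theta_i})[s]\big],
\end{aligned}
\]
the distribution-shift term arising from bounding $\Vert d^{\pi_{\theta_{i+1}}}-d^{\pi_{\theta_i}}\Vert_1$ by the expected total-variation distance between the policies (a Kakade--Langford / TRPO-style argument), and the coefficient $\delta_c^{\pi_{\theta_{i+1}}}$ being exactly the CPO constant $\max_s\vert\E_{a\sim\pi_{\theta_{i+1}}}[A_c^{\theta_i}(s,a)]\vert$. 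The surrogate term is then bounded using $\vert\E_{a\sim\pi_{\theta_{i+1}}}[A_c^{\theta_i}(s,a)]\vert\le\delta_c^{\pi_{\theta_{i+1}}}$ for every $s$, which gives $\delta_c^{\pi_{\theta_{i+1}}}/(1-\gamma)$.

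For the shift term I would convert total variation into KL via Pinsker, $D_{\mathrm{TV}}(\pi_{\theta_{i+1}}\Vert\pi_{\theta_i})[s]\le\sqrt{\tfrac12\KL(\pi_{\theta_i}\Vert\pi_{\theta_{i+1}})[s]}$ (the symmetry of $D_{\mathrm{TV}}$ lets me use precisely the KL direction that the M-step controls), and then apply Jensen's inequality to move the expectation inside the square root, $\E[\sqrt{\tfrac12\KL}]\le\sqrt{\tfrac12\E[\KL]}\le\sqrt{\epsilon/2}$, invoking the M-step trust-region constraint (\ref{eq:m_step_kl}) that $\E[\KL]\le\epsilon$. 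Substituting $\sqrt{\epsilon/2}$ turns the shift term into $\gamma\,\delta_c^{\pi_{\theta_{i+1}}}\sqrt{2\epsilon}/(1-\gamma)^2$; collecting this together with $\epsilon_1$ and $\delta_c^{\pi_{\theta_{i+1}}}/(1-\gamma)$ over the common denominator $(1-\gamma)^2$ produces exactly the claimed bound.

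The hard part will be the bookkeeping around the state distributions: the performance-difference identity lives under $d^{\pi_{\theta_i}}$ after the shift, whereas the M-step constraint controls $\E_{\rho_q}[\KL]$ under the visitation distribution $\rho_q$ induced by the variational distribution $q$. I would either argue these are interchangeable because the E-step keeps $q$ within a trust region of $\pi_{\theta_i}$, or carry the mismatch explicitly through the chain; ensuring the TV-to-state-distribution bound and the Pinsker$+$Jensen step are applied under the correct measure is the delicate point, while the surrogate bound and the final algebraic collection are routine.
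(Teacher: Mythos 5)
Your proof is correct and takes essentially the same route as the paper: the paper simply invokes Corollaries 2 and 3 of CPO (which already package the performance-difference identity, the visitation-shift bound, and the Pinsker-plus-Jensen conversion of total variation to $\sqrt{\epsilon/2}$), bounds the surrogate term by $\delta_c^{\pi_{\theta_{i+1}}}/(1-\gamma)$, and adds $J_c(\pi_{\theta_i})\le\epsilon_1$, exactly as you do after unpacking those corollaries by hand. Your explicit use of the symmetry of total variation to apply Pinsker in the direction $\KL(\pi_{\theta_i}\Vert\pi_{\theta_{i+1}})$ that the M-step actually constrains, and your flagging of the $\rho_q$-versus-$\rho_{\pi_{\theta_i}}$ measure mismatch, are points the paper passes over silently, so your write-up is if anything the more careful of the two.
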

We can see the worst-case episodic constraint violation upper bound for $\pi_{\theta_{i+1}}$ is related to the trust region size $\epsilon$ and the worst-case approximation error $\delta_{c}^{{\theta_{i+1}}}$. Though we inherit a similar bound as CPO, our method ensures the optimality of each update and could be done in a more sample-efficient off-policy fashion. In addition, we observe that by selecting proper KL constraints $\epsilon$ in the M-step, we have the following policy improvement robustness property:
\begin{Proposition}
  Suppose $\pi_{\theta_{i}} \in \Pi_\Qcal^{\epsilon_1}$. $\pi_{\theta_{i+1}}$ and $\pi_{\theta_{i}}$ are related by the M-step. If $\epsilon < \epsilon_2$, where $\epsilon, \epsilon_2$ are the KL threshold in M-step and E-step respectively, then the variational distribution $q_{i+1}^*$ in the next iteration is guaranteed to be feasible and optimal.
\label{Proposition:slater_gurantee}
\end{Proposition}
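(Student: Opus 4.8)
The plan is to reduce the claim to verifying that Slater's condition (Assumption~\ref{assumption:slater}) holds for the E-step of iteration $i+1$, i.e. for the constrained program~(\ref{eq:estep_objective}) posed with $\pi_{\theta_{i+1}}$ now playing the role of the reference policy. Once this is established, the conclusion is immediate: Theorem~\ref{theorem:1} supplies the closed-form variational distribution $q_{i+1}^*$ with zero duality gap, and the final statement of Theorem~\ref{theorem:dual_convex} guarantees that at least one optimal solution exists. Feasibility of $q_{i+1}^*$ is then inherited from the nonemptiness of the constraint set, which Slater's condition witnesses directly by exhibiting a strictly feasible point.

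To verify Slater's condition at iteration $i+1$, I would exhibit an explicit witness distribution and take the natural candidate $\bar q = \pi_{\theta_i}$. Two properties must be checked. First, $\bar q$ must lie in the feasible family, $\bar q \in \Pi_\Qcal^{\epsilon_1}$; this is exactly the hypothesis $\pi_{\theta_{i}} \in \Pi_\Qcal^{\epsilon_1}$. Second, $\bar q$ must lie strictly inside the trust region of $\pi_{\theta_{i+1}}$, i.e. $\E_{\rho_q}[\KL(\bar q \Vert \pi_{\theta_{i+1}})] < \epsilon_2$. Here the M-step~(\ref{eq:m_step_kl}) is precisely what is needed: it enforces $\E_{\rho_q}[\KL(\pi_{\theta_i} \Vert \pi_{\theta_{i+1}})] \le \epsilon$, and since $\epsilon < \epsilon_2$ by hypothesis we obtain the strict inequality. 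The crucial alignment to point out is that the KL divergence constrained in the M-step, $\KL(\pi_{\theta_i} \Vert \pi_{\theta_{i+1}})$, has exactly the same arguments and order as the trust-region term $\KL(q \Vert \pi_{\theta_{i+1}})$ appearing in the next E-step once we substitute $q = \pi_{\theta_i}$, so the M-step bound transfers verbatim. Combining the two properties shows $\bar q$ is a strictly feasible point, establishing Assumption~\ref{assumption:slater} for iteration $i+1$ and hence that $q_{i+1}^*$ is feasible and optimal.

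The main subtlety I expect is the state-distribution bookkeeping: the M-step expectation and the next E-step trust-region constraint are both written under $\rho_q$, but in principle these refer to state distributions induced by different variational distributions. In the algorithmic implementation both expectations are estimated over the same batch of replay-buffer states, so the per-state KL bound transfers directly; to make the argument fully rigorous one would either fix a common sampling distribution or invoke absolute continuity so that a bound on the averaged KL under one state distribution controls it under the other. A second, minor point is the passage from the nonstrict M-step inequality $\le \epsilon$ to the strict trust-region inequality $< \epsilon_2$, which is handled cleanly by the strict hypothesis $\epsilon < \epsilon_2$ rather than $\epsilon \le \epsilon_2$. No further estimates are required, which is exactly why the condition $\epsilon < \epsilon_2$ is the precise mechanism guaranteeing that feasibility and optimality propagate to the next iteration.
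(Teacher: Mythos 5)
Your proposal is correct and follows essentially the same route as the paper's proof: exhibit $\pi_{\theta_i}$ as the Slater witness for the E-step at iteration $i+1$, using the hypothesis $\pi_{\theta_i}\in\Pi_\Qcal^{\epsilon_1}$ for feasibility and the M-step bound together with $\epsilon<\epsilon_2$ for strict trust-region membership, then invoke Theorem~\ref{theorem:1}. Your added care about the order of arguments in the KL divergence and the state-distribution bookkeeping is if anything more precise than the paper's own two-line argument.
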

Proposition \ref{Proposition:slater_gurantee} provides us with an interesting perspective and a theoretical guarantee of the policy improvement robustness -- no matter how bad the M-step update in one iteration is, we are still able to recover to an optimal policy within the feasible region. Furthermore, We find that under the Gaussian policy assumption, multiple steps robustness could also be achieved (see Appendix \ref{app:n_step_robustness} for proof and figure illustrations). The monotonic reward improvement guarantee, the worst-case cost bound and the policy updating robustness together ensure the \textbf{training stability} of CVPO.

\subsection{Practical Implementation}
\label{sec:implementation}
Another advantage of the proposed scheme is that the framework can be easily applied to a more sample efficient off-policy setting, which is important for safe RL because worse samples efficiency usually indicates more constraint violations. 
Practically, the stationary state distribution $\rho_q$ could be approximated by the samples from the replay buffer, which yields the off-policy version of CVPO. 
Algo.~\ref{algo:cvpo} highlights the key steps of one training epoch for continuous action space. 
To solve tasks with discrete action space, we only need to replace summation with integration over action space. 
We also decompose the KL constraints into separate constraints on mean and covariance during the M-step (\ref{eq:m_step_kl}) to achieve better exploration-exploitation trade-off~\cite{abdolmaleki2018maximum}.
For more implementation details and the full algorithm, please refer to Appendix \ref{app:implementation}.
\begin{algorithm}[h]
\caption{CVPO Training for One Epoch}
{\bfseries Input:} \raggedright batch size $B$, particle size $K$, policy parameter $\theta_i$ \par
{\bfseries Output:} \raggedright Updated policy parameter $\theta_{i+1}$ \par
\begin{algorithmic}[1] 
\STATE Sample $B$ transitions from replay buffer
\STATE $\triangleright$ \textit{E-step begins}
\STATE Update $Q_{r}^{\theta_i}, Q_{c}^{\theta_i}$ via Bellman backup
\FOR{$b=1,..., B$}
\STATE Sample $K$ actions $\{a_1,...,a_K\}$ for $s_b$
\STATE Compute $\{Q_{r}^{\theta_i}(s_b,a_k), Q_{c}^{\theta_i}(s_b,a_k); k=1,...,K\}$.
\ENDFOR
\STATE Compute optimal dual variables $\eta^*, \lambda^*$ by solving the convex optimization problem (\ref{eq:dual_opt})
\STATE Compute the optimal variational distribution for each state $\{q^*(\cdot|s_b); b=1,...,B\}$ by Eq. (\ref{eq:optimalq_theorem})
\STATE $\triangleright$ \textit{M-step begins}
\STATE Update policy from $\pi_{\theta_i}$ to $\pi_{\theta_{i+1}}$ via supervised learning objective (\ref{eq:m_step_kl})
\end{algorithmic} \label{algo:cvpo}
\end{algorithm}

\section{Experiment}
\label{sec:experiment}

Motivated by previous works~\cite{ray2019benchmarking, achiam2017constrained, zhang2020first, stooke2020responsive, gronauer2022bullet}, we designed 5 robotic control tasks with different difficulty levels to show the effectiveness of our method.
Detail description of the task environments and method implementations could be found in Appendix \ref{app:implementation_detail}.
The code is also available at \url{https://github.com/liuzuxin/cvpo-safe-rl}.

\subsection{Tasks}
\textbf{Circle Task.} A car robot is expected to move on a circle in clock-wise direction. The agent will receive higher rewards by increasing the velocity and approaching the boundary of the circle. The safety zone is defined by two parallel plane boundaries that are intersected with the circle. The agent receives a cost equals 1 upon leaving the safety zone. The observation space includes the car's ego states and the sensing of the boundary. We name this task as \texttt{Car-Circle}.

\textbf{Goal Task.} The agent aims to reach the goal buttons while avoiding static surrounding obstacles. After the agent press the correct button, the environment will randomly select a new goal button. The agent will receive positive rewards for moving towards the goal button, and a bonus will be given for successfully reaching the goal. A cost will be penalized for violating safety constraints --- colliding with the static obstacles or pressing the wrong button. The observation space includes the agent's ego states and the sensing information about the obstacles and the goal, which is represented by pseudo LiDAR points. We use a Point robot and a Car robot in this environment. We name them as \texttt{Point-Goal} and \texttt{Car-Goal}.

\textbf{Button Task.} This task is a harder version of \texttt{Goal}, where dynamic obstacles are presented in this task. The dynamic obstacles are moving along a circle continuously, and the agent needs to reach the goal while avoiding both static and dynamic obstacles. We can see that the \texttt{Button} task is harder than \texttt{Circle} and \texttt{Goal}, since the agent is required to infer the surrounding obstacles' states from raw sensing data. We also use a Point robot and a Car robot, and we name the tasks as \texttt{Point-Button} and \texttt{Car-Button}.

\begin{figure*}[!htp]
\centering     
\includegraphics[width=0.9\linewidth]{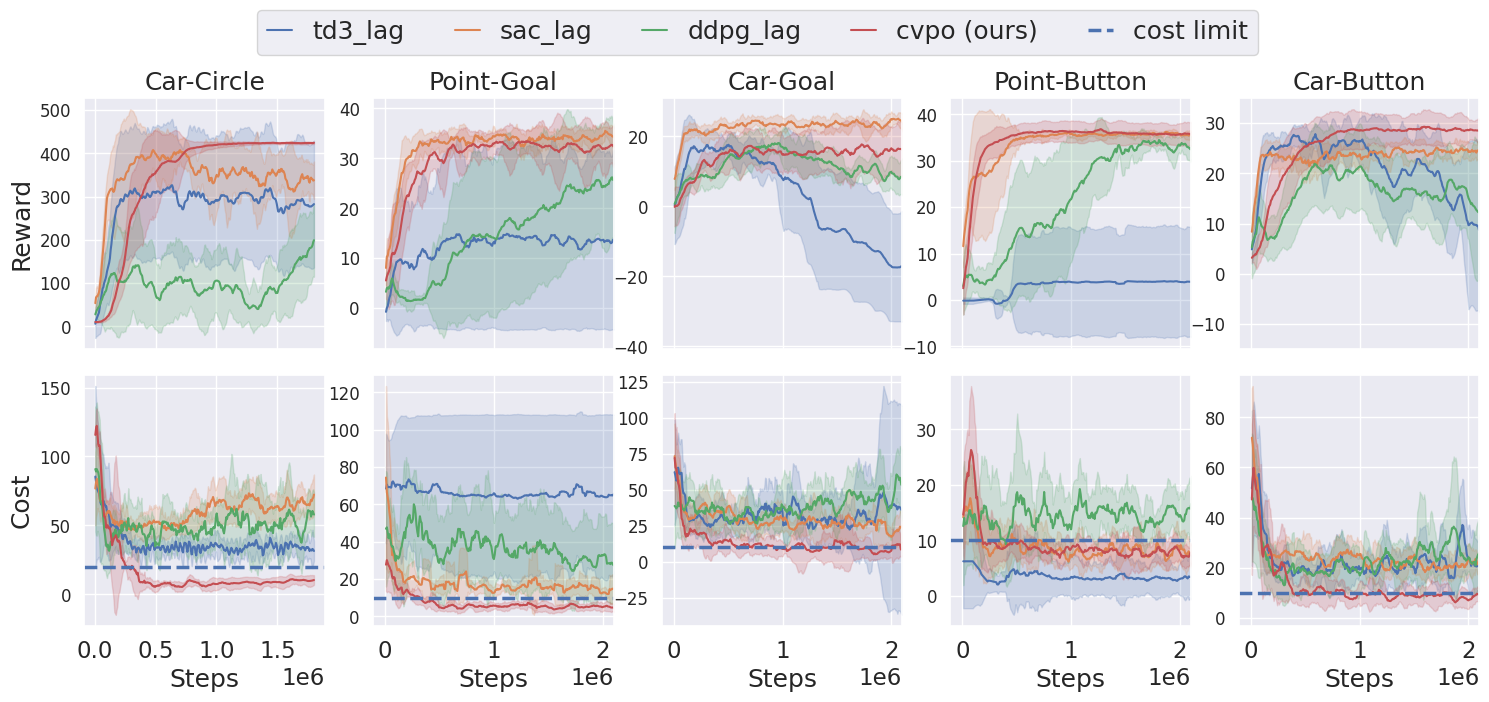}
\caption{Training curves for off-policy baselines comparison. Each column corresponds to an environment. The curves are averaged over 10 random seeds, where the solid lines are the mean and the shadowed areas are the standard deviation.}
\label{fig:off-policy}
\end{figure*}


\subsection{Baselines}
\textbf{Off-policy baselines.} To better understand the role of variational inference, we use three off-policy primal-dual-based baselines. In contrast to the vanilla Lagrangian-based approaches~\cite{ray2019benchmarking}, we use a stronger baseline --- PID-Lagrangian approach~\cite{stooke2020responsive} --- to update the dual variables, which can achieve more stable training.
Since the Lagrangian approach could be easily included in most existing RL methods in principle, we adopt three well-known base algorithms: Soft Actor Critic (SAC)~\cite{haarnoja2018soft}, Deep Deterministic Policy Gradient (DDPG)~\cite{lillicrap2015continuous}, and Twin Delayed DDPG (TD3)~\cite{fujimoto2018addressing}. We name the PID-Lagrangian-augmented safe RL versions as \texttt{SAC-Lag}, \texttt{DDPG-Lag} and \texttt{TD3-Lag}.

\textbf{On-policy baselines.} Constrained Policy Optimization (CPO)~\cite{achiam2017constrained} is used as the on-policy safe RL baseline, since their KL-regularized policy updating algorithm is closely related to us. Additionally, we use two Lagrangian-based baselines that are commonly used in previous works --- \texttt{TRPO-Lag} and \texttt{PPO-Lag}, which are modified from Trust Region Policy Optimization~\cite{schulman2015trust} and Proximal Policy Gradient~\cite{schulman2017proximal}. We also use \texttt{TRPO} as the unconstrained RL baseline to show what is the best task reward performance when ignoring the constraints.

For fair comparison, we use the same network sizes of the policy and critics for all the methods, including CVPO (our method). The safety critics updating rule and the discounting factor are also the same for all off-policy methods. For detailed hyperparameters, please refer to Appendix~\ref{app:hyper-param}.

\begin{figure}[!h]
\vspace{-2mm}
\centering     
\includegraphics[width=0.98\linewidth]{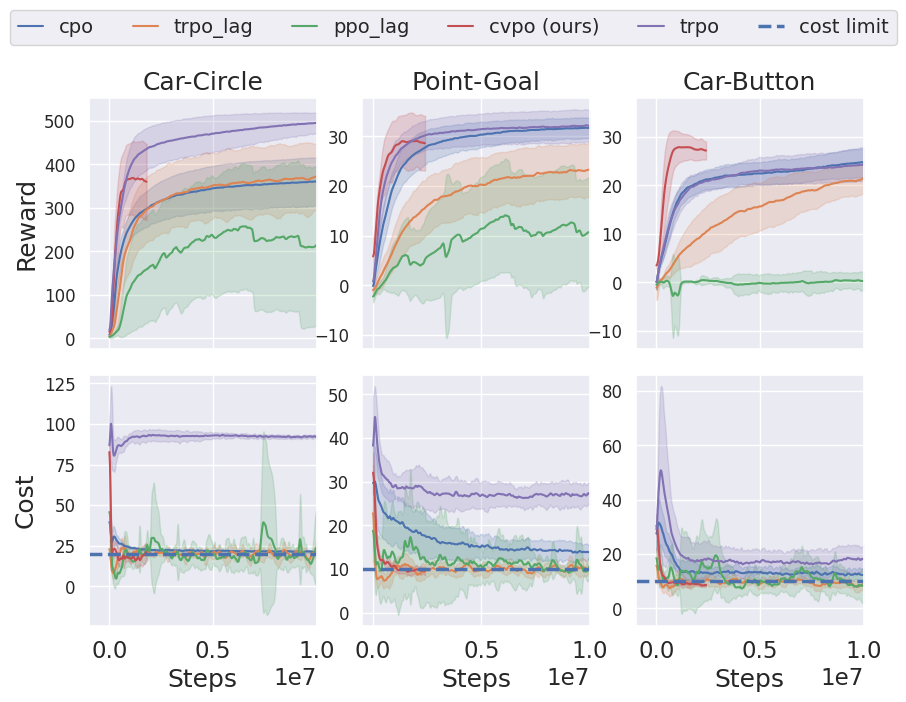}
\caption{Training curves for on-policy baselines comparison.}
\label{fig:on_policy_curves}
\vspace{-4mm}
\end{figure}

\subsection{Results}
We separate the comparison with off-policy and on-policy baselines because off-policy methods are more sample efficient, so the scales for them are different and thus making it hard to distinguish the plots. 
Due to the page limit, we only present some results on a subset of tasks in this section. The complete experimental results for each comparison on all tasks could be found in the Appendix~\ref{app:experiment_result}.

\textbf{Comparison to off-policy baselines.} Fig. \ref{fig:off-policy} shows the training curves for off-policy safe RL approaches. The first row is the undiscounted episodic task reward (the higher, the better). The second row is the undiscounted episodic cost (\# of constraint violations), where the blue dashed line is the target cost threshold. We can clearly see that CVPO outperforms the off-policy baselines in terms of the constraint satisfaction while maintaining high episodic reward, which validates the optimality and feasibility guarantee of our method. Note that all the off-policy baselines use the same network architecture and sizes for $Q_r$ and $Q_c$ critics, and the major difference between them is the policy optimization. The training curves also demonstrate better stability of our approach, as we theoretically analyzed in section \ref{sec:theory_bound}.

\textbf{Comparison to on-policy baselines.} Fig.~\ref{fig:on_policy_curves} shows the training curves for on-policy baselines.
Note that the curves for our method (CVPO) are the same as the ones in Fig.~\ref{fig:off-policy}, and they look to be squeezed because other on-policy baselines require much more samples to converge.
Among the baselines, TRPO-Lag and PPO-Lag have better constraint satisfaction performance than CPO, though PPO-Lag suffers from large variance and low task reward. 
CPO fails to satisfy the constraint for the Goal and Button tasks, which is probably caused by the approximation error during the policy update, as we discussed in section~\ref{sec:estep}. 
Surprisingly, our method can achieve comparable task performance to the \textit{unconstrained RL} baseline -- TRPO (purple curves), while maintaining safety with much fewer constraint violations than on-policy safe RL approaches.

\begin{figure*}[!htp]
\centering     
\includegraphics[width=0.95\linewidth]{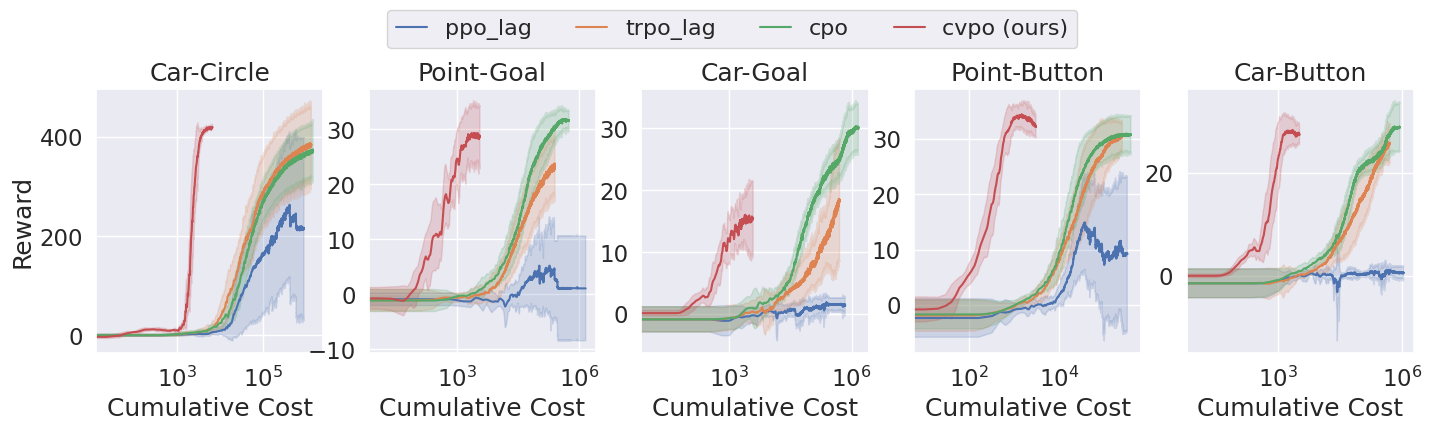}
\caption{Reward versus cumulative cost (log-scale).}
\label{fig:reward_cost}
\vspace{-4mm}
\end{figure*}

Fig.~\ref{fig:reward_cost} demonstrates the \textbf{efficacy} of utilizing each cost -- how much task rewards we could obtain given a budget of constraint violations. The curves that approach to the upper left are better because fewer costs are required to achieve high rewards. We can clearly see that CVPO outperforms baselines with large margin among all tasks -- we use \textbf{100 $\boldsymbol{\sim}$ 1000 times less} cumulative constraint violations to obtain the same task reward. Note that the x-axis is on the log-scale.


\begin{figure}[!h]
\centering     
\includegraphics[width=0.98\linewidth]{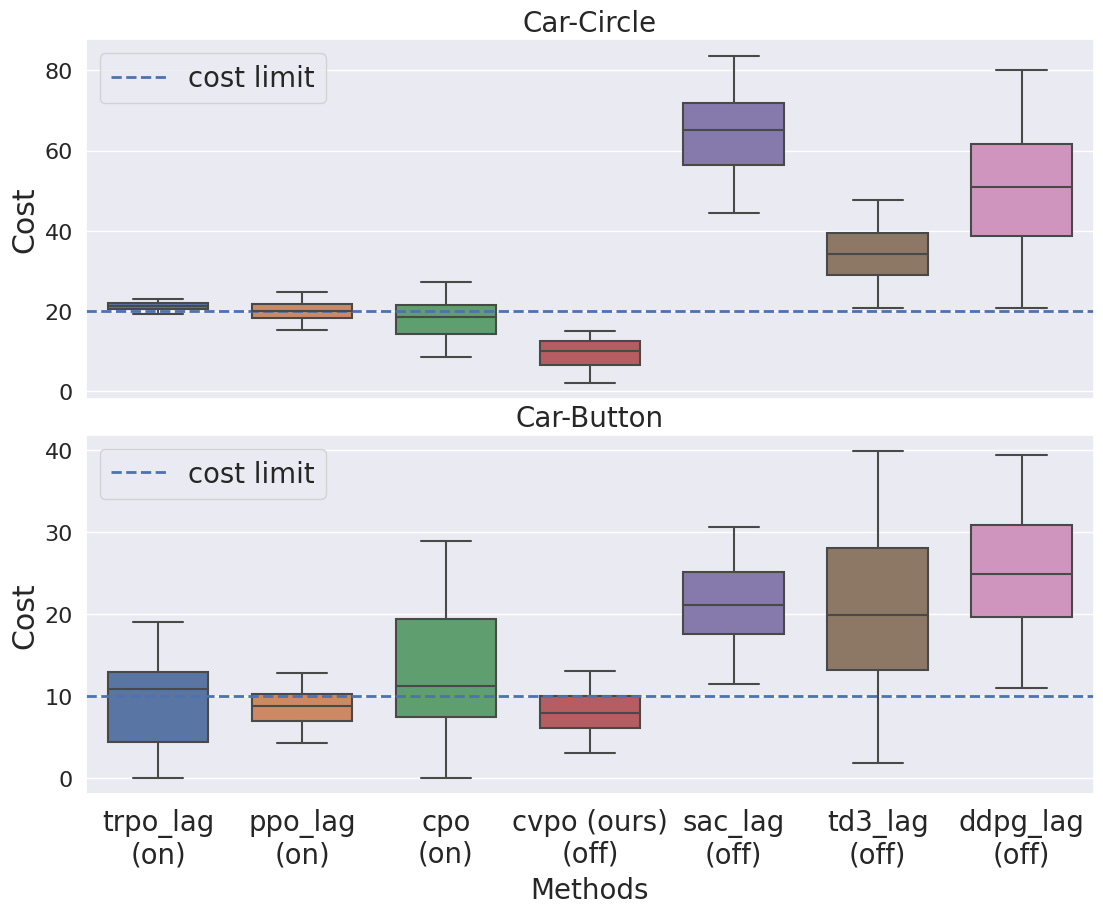}
\caption{Box plot of the convergence cost. (on) and (off) denotes on-policy and off-policy method, respectively.}
\label{fig:box_plot}
\vspace{-4mm}
\end{figure}

\textbf{Convergence cost comparison.} Fig. \ref{fig:box_plot} shows the box plot of each method's constraint satisfaction performance after convergence. We define convergence as the last $20\%$ training steps. The box plot -- also called whisker plot -- presents a five-number summary of the data. The five-number summary is the minimum (the lower solid line), first quartile (the lower edge of the box), median (the solid line in the box), third quartile (the upper edge of the box), and maximum (the upper solid line).
As shown in the figure, on-policy baselines have better constraint satisfaction performance than off-policy baselines, which indicates that extending the primal-dual framework to off-policy settings is non-trivial. We find that our method meets the safety requirement better and with smaller variance than most baselines. Interestingly, we could observe that even \textbf{the third quartile of cost of our approach is below the target cost threshold}, which indicates that CVPO satisfies constraints state-wise, rather than in expectation as baselines. The reason is that we sample a mini-batch from the replay buffer to compute the optimal non-parametric distribution for these states respectively, while baseline approaches directly optimize the policy to satisfy the constraints in expectation. 



\subsection{Ablation study}
\label{sec:ablation_study}
\textbf{The role of KL regularizer in M-step.}
Fig.~\ref{fig:kl_compare} shows the importance of adding the KL constraint during the policy improvement phase. Since the M-step is essentially a supervised learning problem -- fitting the policy with the optimal distribution solved in the E-step, the policy may easily collapse to local optimum action distributions computed from the current batch of data. Without the KL constraints, the worst-case performance bound in Proposition~\ref{Proposition:cost_bound} tends to be infinite and the robustness guarantee in Proposition~\ref{Proposition:slater_gurantee} does not hold, so the agent may fail to satisfy the constraints (Car-Button task). In addition, the policy overfitting prevent the agent from exploring more rewarding trajectories and thus lead to low reward (Car-Circle and Point-Goal tasks).

\begin{figure}[!h]
\centering     
\includegraphics[width=0.95\linewidth]{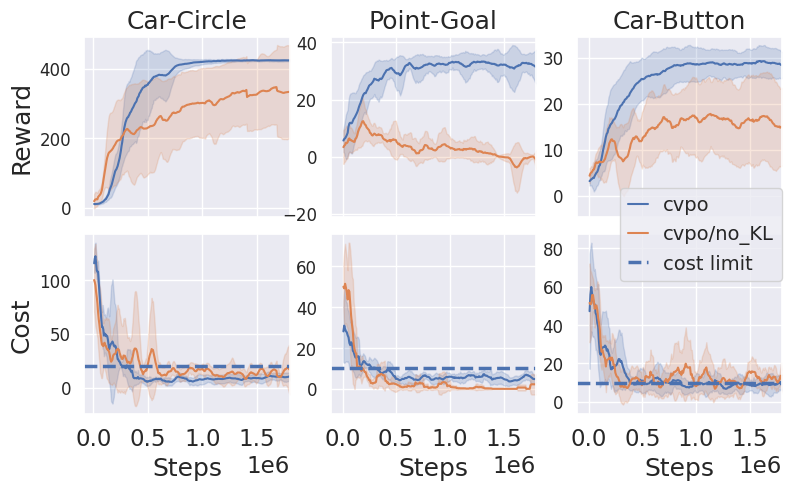}
\caption{Ablation study of the KL constraint in M-step.}
\label{fig:kl_compare}
\vspace{-4mm}
\end{figure}


\section{Conclusion}
We show the safe RL problem can be decomposed into a convex optimization phase with a non-parametric variational distribution and a supervised learning phase. 
We show the unique advantages of constrained variational policy optimization:
1) high \textbf{sample-efficiency} from the off-policy variant of the approach; 2) \textbf{stability} ensured by the monotonic reward improvement guarantee, worst-case constraint violation bound, and the policy updating robustness; and 3) with theoretical \textbf{optimality and feasibility guarantees} by solving a provable and mostly strict convex optimization problem in the E-step. 
We validate the proposed method with extensive experiments, and the results demonstrate the better empirical performance of our method than previous primal-dual approaches in terms of constraint satisfaction and sample efficiency. 

The limitations include that our method would be more computationally expensive since a convex optimization problem need to be solved in the E-step.
In addition, how to improve the critic's estimation of future constraint violations in safe RL is an important and promising topic.
We believe our work can provide a new perspective in the safe RL field and inspire more exciting research in this direction.


\section*{Acknowledgements}
We gratefully acknowledge support from the NSF CAREER CNS-2047454, NSF grant No.1910100, NSF CNS No.2046726, C3 AI, and the Alfred P. Sloan Foundation.
We also would like to thank Jiacheng Zhu, Yufeng Zhang, Gokul Swamy for their help in various aspects of this paper.

\bibliography{citation}
\bibliographystyle{icml2021}

\newpage
\onecolumn
\appendix
\section{Proofs and Discussions}
\subsection{Proof of the evidence lower bound (ELBO) in section~\ref{sec:safe_inference}}
\label{app:elbo}
Denote the probability of a trajectory $\tau$ under the policy $\pi$ as $p_\pi(\tau)=p(s_0)\prod_{t\geq 0}p(s_{t+1} \mid s_t, a_t)\pi(a_t\mid s_t)$, then the lower bound of the log-likelihood of optimality given the policy $\pi$ is
\begin{align}
    \log  p_\pi  (O=1) &= \log \int  p(O=1 \mid \tau)p_\pi(\tau) d\tau \\
    &= \log \E_{\tau \sim q} \left[\frac{p(O=1 \mid \tau)p_\pi(\tau)}{q(\tau) }\right] \\ 
    & \geq \E_{\tau \sim q} \log \frac{p(O=1 \mid \tau)p_\pi(\tau)}{q(\tau) } \label{eq:jensen} \\
    &= \E_{\tau \sim q} \log p(O=1 \mid \tau) + \E_{\tau \sim q} \log \frac{p_\pi(\tau)}{q(\tau) }\\
    &\propto \E_{\tau \sim q} \left[\sum_{t=0}^\infty \gamma^t r_t\right] - \alpha \KL(q(\tau)  \Vert p_\pi(\tau)) = \Jcal(q,\pi)
\end{align}
where inequality (\ref{eq:jensen}) follows Jensen's inequality, $q(\tau)$ is an auxiliary trajectory distribution.

\subsection{Proof and discussion of Theorem \ref{theorem:1} --- the optimal variational distribution and the dual function}
\label{app:dual_function}
Recall that the objective in E-step:
\begin{equation}
\begin{split}
   \max_{q} & \quad \E_{\rho_q}\Big[ \int q(a|s) Q_r^{\pi_{\theta_i}}(s,a) da \Big]\\
    s.t. & \quad \E_{\rho_q}\Big[\int q(a|s) Q_c^{\pi_{\theta_i}}(s,a) da \Big] \leq \epsilon_1 \\
    & \quad \E_{\rho_q}\Big[\KL(q(a|s)\Vert \pi_{\theta_i}(a|s)) \Big] \leq \epsilon_2 \\
    & \quad \int q(a | s) da = 1, \quad \forall s \sim \rho_q.
\end{split}
\end{equation}
Then we prove the optimal variational distribution analytical form and its dual function in Theorem \ref{theorem:1}.
\begin{proof}
To solve the constrained optimization problem, we first convert it to the equivalent Lagrangian function:
\begin{align}
    L(q,\lambda, \eta, \kappa) & = \int \rho_q(s) \int q(a|s) Q_r^{\pi_{\theta_i}}(s,a) da ds \\
    &+ \lambda \left( \epsilon_1 - \int\rho_q(s)\int q(a|s) Q_c^{\pi_{\theta_i}}(s,a) dads \right) \\ 
    & + \eta \left(\epsilon_2 - \int\rho_q(s)\int q(a|s) \log \frac{q(a|s)}{\pi_{\theta_i}(a|s)}dads \right) \\
    &+ \kappa \left(1 - \int\rho_q(s)\int q(a|s)dads \right),
    \label{eq:lagrangian}
\end{align}
where $\lambda, \eta, \kappa$ are the Lagrange multipliers for the constraints. 
Since the objective is linear and all constraints are convex (note that KL is convex), the E-step optimization problem is convex.
Then we obtain the equivalent dual problem:
\begin{equation}
    \min_{\lambda, \eta, \kappa} \max_q L(q, \lambda, \eta, \kappa).
    \label{eq:dual_formulation}
\end{equation}

Take the derivative of Lagrangian function w.r.t $q$:
\begin{equation}
    \frac{\partial L}{\partial q} = Q_r^{\pi_{\theta_i}}(s,a) - \lambda Q_c^{\pi_{\theta_i}}(s,a) - \eta - \kappa - \eta\log\frac{q(a|s)}{\pi_{\theta_i}(a|s)}.
    \label{eq:q_partial}
\end{equation}

Let Eq. (\ref{eq:q_partial}) be zero, we have the form of the optimal $q$ distribution:
\begin{align}
    q^*(a|s) = \pi_{\theta_i}(a|s) \exp\left(\frac{Q_r^{\pi_{\theta_i}}(s,a) - \lambda Q_c^{\pi_{\theta_i}}(s,a)}{\eta}\right) \exp\left(-\frac{\eta+\kappa}{\eta}\right),
    \label{eq:optimalq}
\end{align}
where $\exp\left(-\frac{\eta+\kappa}{\eta}\right)$ could be viewed as a normalizer for $q(a|s)$ since it is a constant that is independent of $q$. Thus, we obtain the following form of the normalizer by integrating the optimal $q$:
\begin{align}
    \exp\left(\frac{\eta+\kappa}{\eta}\right) = \int \pi_{\theta_i}(a|s) \exp\left(\frac{Q_r^{\pi_{\theta_i}}(s,a) - \lambda Q_c^{\pi_{\theta_i}}(s,a)}{\eta}\right) da,
    \label{eq:normalizer}
\end{align}
\begin{align}
    \frac{\eta+\kappa}{\eta} = \log\int \pi_{\theta_i}(a|s) \exp\left(\frac{Q_r^{\pi_{\theta_i}}(s,a) - \lambda Q_c^{\pi_{\theta_i}}(s,a)}{\eta}\right) da.
    \label{eq:normalizer2}
\end{align}
Take the optimal $q$ distribution in Equation~(\ref{eq:optimalq}) and $\frac{\eta+\kappa}{\eta}$ in Equation~(\ref{eq:normalizer2}) back to the Lagrangian function~(\ref{eq:lagrangian}), we can find that most of the terms are cancelled out, and obtain the dual function $g(\eta, \lambda)$,
\begin{align}
    g(\eta, \lambda) = \lambda \epsilon_1 + \eta \epsilon_2 + \eta \int \rho_q(s) \log \int \pi_{\theta_i}(a|s) \exp\left(\frac{Q_r^{\pi_{\theta_i}}(s,a) - \lambda Q_c^{\pi_{\theta_i}}(s,a)}{\eta}\right) da ds.
    \label{eq:dual}
\end{align}
The optimal dual variables are calculated by
\begin{align}
   \eta^*, \lambda^* = \arg\min_{\eta, \lambda} g(\eta, \lambda).
\label{eq:optimal_dual}
\end{align}
\end{proof}
A good property is that the dual function is convex (as we will prove in Appendix~\ref{app:convex}), so we could use off-the-shelf convex optimization tools to solve the dual problem. Also, under the Slater's assumption (\ref{assumption:slater}) --- there exists a feasible distribution $\Bar{q} \in \Pi_\Qcal$ within the trust region of the old policy $\pi_{\theta_i}$: $\KL(\Bar{q}\Vert \pi_{\theta_i})\leq \epsilon_2$, we could solve the optimal dual variables efficiently.

\begin{Lemma}
(Strong duality). If the Slater condition in Assumption (\ref{assumption:slater}) holds, then the strong duality holds for the original problem (\ref{eq:estep_objective}) and the dual problem (\ref{eq:dual_formulation}).
\label{lemma:strong_duality}
\end{Lemma}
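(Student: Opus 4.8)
The plan is to recognize that the E-step problem (\ref{eq:estep_objective}) is a convex program and then invoke Slater's constraint qualification to conclude a zero duality gap. First I would verify convexity explicitly: the objective $\E_{\rho_q}[\int q(a|s) Q_r^{\pi_{\theta_i}}(s,a)\,da]$ is linear in $q$, the safety constraint $\E_{\rho_q}[\int q(a|s) Q_c^{\pi_{\theta_i}}(s,a)\,da]\le\epsilon_1$ is affine in $q$, the normalization constraint $\int q(a|s)\,da=1$ is affine, and the regularization constraint is convex because $q\mapsto\KL(q\Vert\pi_{\theta_i})$ is convex (relative entropy is jointly convex and $\pi_{\theta_i}$ is held fixed). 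Hence we maximize a concave (indeed linear) functional over a convex feasible set, so (\ref{eq:estep_objective}) is a convex optimization problem and weak duality $p^*\le d^*$ holds automatically for (\ref{eq:dual_formulation}).

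Next I would locate a Slater point. Assumption~\ref{assumption:slater} supplies $\Bar q\in\Pi_\Qcal^{\epsilon_1}$ with $\KL(\Bar q\Vert\pi_{\theta_i})<\epsilon_2$. Because the feasible family $\Pi_\Qcal^{\epsilon_1}$ is defined through the \emph{strict} inequality on the cost, $\Bar q$ satisfies \emph{both} inequality constraints strictly, while the remaining constraint (normalization) is affine. This is exactly the generalized Slater condition: strict feasibility of the inequality constraints together with feasibility of the affine equality constraint.

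With convexity and a Slater point in hand, I would conclude by the standard strong-duality theorem for convex programs. The cleanest self-contained route is the perturbation (value-function) argument: define
\begin{equation}
v(u_1,u_2)=\sup\Big\{\E_{\rho_q}\big[\textstyle\int q\,Q_r^{\pi_{\theta_i}}\,da\big] : \E_{\rho_q}\big[\textstyle\int q\,Q_c^{\pi_{\theta_i}}\,da\big]\le\epsilon_1+u_1,\ \E_{\rho_q}[\KL(q\Vert\pi_{\theta_i})]\le\epsilon_2+u_2,\ \textstyle\int q\,da=1\Big\}.
\end{equation}
This $v$ is concave in $(u_1,u_2)$, and the Slater point places the origin in the interior of its effective domain; concavity then forces $v$ to be subdifferentiable at $(0,0)$, and any supporting hyperplane there has slopes equal to the optimal multipliers $(\lambda^*,\eta^*)\ge 0$, certifying $p^*=v(0,0)=d^*$, i.e.\ strong duality. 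I expect the main obstacle to be the infinite-dimensional nature of the problem when $\Acal$ is continuous, where one must check that the constraint qualification and the supporting-hyperplane step are valid in the appropriate space of signed measures (closedness of the relevant perturbation set and nonemptiness of interiors). I would sidestep this either by working with the $K$-particle discretization used in the implementation, which renders (\ref{eq:estep_objective}) a genuinely finite-dimensional convex program to which textbook Slater duality applies verbatim, or by appealing to an infinite-dimensional convex-duality theorem once lower semicontinuity of the $\KL$ functional is noted.
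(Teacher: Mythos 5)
Your proposal is correct and follows essentially the same route as the paper: the paper establishes the lemma by noting that the objective is linear and all constraints are convex (KL being convex in $q$ for fixed $\pi_{\theta_i}$), so that Assumption~\ref{assumption:slater} is exactly Slater's constraint qualification and standard convex duality gives a zero duality gap. Your additional care about the infinite-dimensional case and the fallback to the $K$-particle discretization is a reasonable way to make rigorous what the paper leaves implicit, but it is not a different proof strategy.
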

Lemma \ref{lemma:strong_duality} implies that the optimality of our closed form non-parametric distribution in Eq. (\ref{eq:optimalq}) could be guaranteed, since there is zero duality gap between the primal and dual problem.

Based on above proofs and lemma, we could easily obtain Theorem \ref{theorem:1}.
Note that Theorem \ref{theorem:1} is similar to the one in FOCOPS~\cite{zhang2020first} but with several major differences: 1) they use a parametric policy as the optimization objective in (\ref{eq:optimalq_theorem}) instead of our non-parametric variational distribution. The parametrized form makes the problem hard to solve analytically, and thus first-order approximation over the policy parameters is required; 2) we provide an analytical form of the dual function and prove it is convex, such that the dual variables $\eta, \lambda$ are guaranteed to be optimal by convex optimization, while they use a gradient-based method to solve the dual variables, which is similar to the Lagrangian-based methods in the literature~\cite{stooke2020responsive}; 3) they consider the on-policy advantage estimations in their objective (\ref{eq:optimalq_theorem}), while we allow off-policy evaluation to update the policy, which should be more sample-efficient.

\subsection{Proof and discussion of Theorem \ref{theorem:dual_convex} --- the dual function's convexity and the condition of strong convexity}
\label{app:convex}
Note the dual function is the supremum of Lagrangian function w.r.t $q$:
\begin{equation}
    g(\eta, \lambda) = \max_q L(q,\lambda, \eta, \kappa),
\end{equation}
where $\kappa$ can be calculated by Eq. (\ref{eq:normalizer}) given $q$. While the convexity of dual function $g$ has been proven in previous literature (Proposition 1, section 8.3 in \citep{luenberger1997optimization}), we further derive the conditions of strong convexity.

Recall the dual function (we omit $\theta$ to simplify the notation):
\begin{align}
    g(\eta, \lambda) = \lambda \epsilon_1 + \eta \epsilon_2 + \eta \int \rho_q(s) \log \int \pi(a|s) \exp\left(\frac{Q_r(s,a) - \lambda Q_c(s,a)}{\eta}\right) da ds .
\end{align}
Observe that the outer integral is independent of $\eta, \lambda$, so the convexity of $g(\eta, \lambda)$ is the same as:
\begin{align}
    \Bar{g}(\eta, \lambda) = \lambda \epsilon_1 + \eta \epsilon_2 + \eta \log \int \pi(a|s) \exp\left(\frac{Q_r(s,a) - \lambda Q_c(s,a)}{\eta}\right) da.
    \label{eq:dual_per_state}
\end{align}
  
\begin{Remark}
Interestingly, both dual functions could be used in off-policy implementation by tuning the batch size hyper-parameter.
We could view $g(\eta, \lambda)$ as solving the dual variables through a batch of data, while $\Bar{g}(\eta, \lambda)$ is a per-state variant --- batch size equals one. Namely, $\Bar{g}(\eta, \lambda)$ aims to compute the optimal dual variables for each state. Practically, we prefer using the batch version because computing the dual for each state is computational expensive and sensitive to the Q-value estimation error.
\end{Remark}

To further prove the strong convexity conditions of $\Bar{g}$, we first introduce the following lemmas.

\begin{Lemma} (Lagrange's Identity)
Given two functions $f, g$ such that $\int_a^b f^2(x)dx<+\infty, \int_a^b g^2(x)dx<+\infty$, the following equation holds,
\begin{equation}
    \int_a^b f^2(x)dx\int_a^b g^2(x)dx-\left(\int_a^b f(x)g(x)dx\right)^2 = \frac{1}{2}\int_a^b \int_a^b \left(f(x)g(y)-g(x)f(y)\right)^2 dxdy.
\end{equation}
\label{lemma:Lagrange_identity}
\end{Lemma}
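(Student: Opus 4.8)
The plan is to prove the identity by directly expanding the squared integrand on the right-hand side and then collapsing the resulting double integrals into products of single integrals. First I would write out
$$\left(f(x)g(y)-g(x)f(y)\right)^2 = f^2(x)g^2(y) - 2f(x)g(x)f(y)g(y) + g^2(x)f^2(y),$$
so that the right-hand side splits as a sum of three double integrals over the square $[a,b]\times[a,b]$.

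Next I would separate each double integral into a product of two one-dimensional integrals. This is the one step that genuinely relies on the hypotheses: since $\int_a^b f^2<+\infty$ and $\int_a^b g^2<+\infty$, the Cauchy--Schwarz inequality gives $\int_a^b |fg|<+\infty$, so each integrand is absolutely integrable on the product domain and Fubini's theorem applies. The first and third terms then each evaluate to $\big(\int_a^b f^2(x)\,dx\big)\big(\int_a^b g^2(x)\,dx\big)$, while the mixed term factors as $-2\big(\int_a^b f(x)g(x)\,dx\big)^2$.

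Finally I would collect the three contributions, obtaining $2\big(\int_a^b f^2\big)\big(\int_a^b g^2\big) - 2\big(\int_a^b fg\big)^2$; multiplying by the prefactor $\tfrac12$ reproduces the left-hand side exactly. The argument is entirely mechanical, so I do not anticipate any real obstacle; the only substantive point is the appeal to Fubini, which the square-integrability assumptions legitimize. I expect this identity to be deployed next in a Cauchy--Schwarz-type estimate, where the nonnegativity of the right-hand side furnishes the strictness needed for the convexity conditions of the dual function.
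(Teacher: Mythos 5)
Your proposal is correct and is essentially the paper's own argument run in reverse: the paper starts from the left-hand side, rewrites the products of integrals as symmetrized double integrals, and recognizes the complete square, whereas you expand the square on the right and factor each double integral back into products. Your explicit appeal to Cauchy--Schwarz and Fubini to justify the factorization is a detail the paper leaves implicit, but the underlying computation is identical.
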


\begin{proof}
\begin{align}
    \text{LHS} &= \int_a^b\int_a^b f^2(x) g^2(y)dxdy - \int_a^b f(x)g(x)dx\int_a^b f(y)g(y)dy\\
    &= \frac{1}{2}\int_a^b\int_a^b f^2(x) g^2(y)dxdy + \frac{1}{2}\int_a^b\int_a^b f^2(y) g^2(x)dxdy - \int_a^b\int_a^b f(x)g(x) f(y)g(y) dxdy \\
    &= \frac{1}{2}\int_a^b \int_a^b \left(f(x)g(y)-g(x)f(y)\right)^2 dxdy =\text{RHS}.
\end{align}
\end{proof}
According to the convexity of dual function, the Hessian matrix of $\Bar{g}(\lambda, \eta)$ is always positive semi-definite. Extending to this generic property, the following lemma gives the conditions of strict positive definiteness of Hessian matrix in constrained RL problem.
\begin{Lemma}
The Hessian matrix $\rmH$ of $\Bar{g}(\lambda, \eta)$ is strictly positive definite when (1) $Q_r^{\theta_i}(s,\cdot), Q_c^{\theta_i}(s,\cdot)$ are not constant functions; (2) $\forall C\in\mathbb{R}, \exists a_0, s.t.  Q_r^{\theta_i}(s,a_0)\neq C\cdot Q_c^{\theta_i}(s,a_0)$; and (3) $\lambda^* < +\infty$, where $\lambda^*$ is the optimal dual variables in Eq. (\ref{eq:optimal_dual}).
\label{proposition:Hessian}
\end{Lemma}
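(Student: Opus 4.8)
The plan is to compute the Hessian $\rmH$ of $\Bar{g}(\eta,\lambda)$ in (\ref{eq:dual_per_state}) explicitly and then establish positive definiteness by Sylvester's criterion. First I would note that the affine part $\lambda\epsilon_1+\eta\epsilon_2$ contributes nothing to $\rmH$, so $\rmH=\nabla^2\!\big(\eta\log Z\big)$ with $Z(\eta,\lambda)=\int\pi(a|s)\exp\big((Q_r-\lambda Q_c)/\eta\big)\,da$. The key observation is that the tilted measure $p(a)\coloneqq \pi(a|s)\exp\big((Q_r-\lambda Q_c)/\eta\big)/Z$ is exactly the optimal variational distribution $q^*$ of (\ref{eq:optimalq}); differentiating $\log Z$ under the integral sign repeatedly converts each derivative into a centered moment of $Q_r,Q_c$ under $p$.

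Carrying out the differentiation (holding $s$ fixed and writing $X\coloneqq Q_r-\lambda Q_c$), I expect the entries to come out as
\begin{equation}
\rmH=\begin{pmatrix}\tfrac{1}{\eta^3}\Var_p(X) & \tfrac{1}{\eta^2}\Cov_p(X,Q_c)\\[2pt] \tfrac{1}{\eta^2}\Cov_p(X,Q_c) & \tfrac{1}{\eta}\Var_p(Q_c)\end{pmatrix},
\end{equation}
all moments taken under $p=q^*$. Condition (3), $\lambda^*<+\infty$, together with $\eta>0$, guarantees that $p$ is a genuine full-support probability density and that the prefactors $1/\eta^k$ are finite and positive, so the sign structure of $\rmH$ is inherited entirely from the covariance matrix of $(X,Q_c)$.

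For positive definiteness I would check the two leading principal minors after ordering so that $\Var_p(Q_c)$ is the $(1,1)$ entry. That minor satisfies $\Var_p(Q_c)>0$ precisely because $Q_c(s,\cdot)$ is non-constant (condition (1)) and $p$ has full support, so $Q_c$ is non-degenerate under $p$. The determinant is $\det\rmH=\tfrac{1}{\eta^4}\big(\Var_p(X)\Var_p(Q_c)-\Cov_p(X,Q_c)^2\big)$; since $(X,Q_c)$ is an invertible linear image of $(Q_r,Q_c)$ this equals $\tfrac{1}{\eta^4}\det\Cov_p(Q_r,Q_c)$. This is where Lemma~\ref{lemma:Lagrange_identity} enters: applying Lagrange's identity with $f(a)=\sqrt{p(a)}\,(Q_r(s,a)-\E_p[Q_r])$ and $g(a)=\sqrt{p(a)}\,(Q_c(s,a)-\E_p[Q_c])$ rewrites $\det\Cov_p(Q_r,Q_c)$ as $\tfrac12\iint\big(f(a)g(a')-g(a)f(a')\big)^2\,da\,da'$, a manifestly nonnegative quantity that vanishes only when the centered $Q_r$ and $Q_c$ are proportional functions on the support of $p$, i.e. only when some nontrivial affine combination of $Q_r,Q_c$ is constant. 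Conditions (1) and (2) are designed to forbid exactly this: (1) rules out either Q-function being constant, and (2) rules out $Q_r$ being a scalar multiple of $Q_c$.

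The main obstacle I anticipate is this determinant step: upgrading nonnegativity to \emph{strict} positivity, which reduces to a careful characterization of the equality case of the Cauchy--Schwarz/Lagrange identity and matching that degeneracy against the stated hypotheses (in particular being careful about whether a constant offset between $Q_r$ and $Q_c$ is covered by condition (2)). Once strict positive definiteness of $\rmH$ on the interior $\eta>0$ is established, $\Bar{g}$ is strictly convex, and combined with the existence of a minimizer guaranteed by Slater's condition (Assumption~\ref{assumption:slater}) this yields the unique optimal $(\eta^*,\lambda^*)$ claimed in Theorem~\ref{theorem:dual_convex}.
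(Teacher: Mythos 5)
Your proposal follows essentially the same route as the paper's proof: you compute the same Hessian (your covariance entries $\tfrac{1}{\eta}\Var_p(Q_c)$, $\tfrac{1}{\eta^3}\Var_p(X)$, $\tfrac{1}{\eta^2}\Cov_p(X,Q_c)$ coincide with the paper's $\rmH_{1,1},\rmH_{2,2},\rmH_{1,2}$ after relabeling the coordinate order), and you certify positive definiteness by the same Lagrange-identity/Cauchy--Schwarz argument together with the same equality-case analysis. The one caveat you flag --- that vanishing of $\det\rmH$ corresponds to an \emph{affine} relation $Q_r = C\,Q_c + D$ rather than strict proportionality, which condition (2) as literally stated does not quite exclude --- is real, and the paper's own Eq.~(\ref{eq:strict_convex_condition}) in fact derives exactly this affine condition before summarizing it loosely as proportionality.
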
 

\begin{proof}
For simplicity, let $M(a)\doteq \pi(a|s) \exp\left(\frac{Q_r(s,a) - \lambda Q_c(s,a)}{\eta}\right)$, then $\Bar{g}(\eta, \lambda) = \lambda \epsilon_1 + \eta \epsilon_2 + \eta \log \int  M(a) da $,
\begin{equation}
\begin{cases}
\cfrac{\partial M(a)}{\partial \lambda} &= M(a)\left(-\cfrac{Q_c(s,a)}{\eta}\right)\\
\cfrac{\partial M(a)}{\partial \eta} &= M(a)\left(-\cfrac{Q_r(s,a)-\lambda Q_c(s,a)}{\eta^2}\right)
\end{cases}
\end{equation}

\begin{equation}
\Rightarrow 
\begin{cases}
\cfrac{\partial \Bar{g}(\lambda, \eta)}{\partial \lambda} &= \epsilon_1 - \cfrac{\int M(a) Q_c(s,a) da}{\int M(a) da}\\
\cfrac{\partial \Bar{g}(\lambda, \eta)}{\partial \eta} &= \epsilon_2 + \log{\int M(a) da} - \cfrac{\int M(a)[Q_r(s,a) - \lambda Q_c(s,a)] da}{\eta \int M(a) da}
\end{cases}
\label{eq:1st-derivate}
\end{equation}

\begin{equation}
\Rightarrow 
\begin{cases}
\rmH_{1,1} = \cfrac{\partial^2 \Bar{g}(\lambda, \eta)}{\partial \lambda^2} &= \cfrac{\int M(a) (Q_c(s,a))^2 da\int M(a)da - \left(\int M(a) Q_c(s,a)da\right)^2}{\eta (\int M(a) da)^2} \\
\rmH_{2,2} = \cfrac{\partial^2 \Bar{g}(\lambda, \eta)}{\partial \eta^2} &= \cfrac{\int M(a) [Q_r(s,a) - \lambda Q_c(s,a)]^2 da\int M(a)da - \left(\int M(a) [Q_r(s,a) - \lambda Q_c(s,a)] da\right)^2}{\eta^3 (\int M(a) da)^2} \\
\rmH_{1,2} = \cfrac{\partial^2 \Bar{g}(\lambda, \eta)}{\partial \lambda \partial \eta} &= \rmH_{2,1} = \cfrac{\partial^2 \Bar{g}(\lambda, \eta)}{\partial \eta \partial \lambda} = \cfrac{\int M(a) da \int M(a) [Q_r(s,a)-\lambda Q_c(s,a)]Q_c(s,a) da}{\eta^2 (\int M(a) da)^2} \\
&\quad- \cfrac{\int M(a) Q_c(s,a)da \int M(a) [Q_r(s,a)-\lambda Q_c(s,a)]da}{\eta^2 (\int M(a) da)^2}.
\end{cases}
\end{equation}
Therefore, $\rmH$ is positive semi-definite with following two conditions,
\begin{equation}
\rmH_{1,1}\geq 0,\quad \rmH_{1,1}\rmH_{2,2}-\rmH_{1,2}^2\geq 0.
\label{eq:2_conditions}
\end{equation}
Note $\rmH$ is strictly positive definite when the equality does not hold. We will first prove the inequality holds and further discuss the condition of strict positive definiteness later.

By Cauchy–Schwarz inequality, 
\begin{equation}
\int M(a) (Q_c(s,a))^2 da\int M(a)da \geq \left(\int M(a) Q_c(s,a)da\right)^2 \Rightarrow \rmH_{1,1} \geq 0.
\end{equation}

i.e., the first condition in Eq. (\ref{eq:2_conditions}) holds. To prove the second condition, we use shorthand $M=M(a), Q_r=Q_r(s,a),Q_c=Q_c(s,a)$ for simplicity,

\begin{equation}
    \begin{aligned}
    & \rmH_{1,1}\rmH_{2,2}-\rmH_{1,2}^2\geq 0\\
    \Leftrightarrow & \left[ \int M Q_c^2 da\int M da - \left(\int M Q_c da\right)^2 \right] \left[ \int M (Q_r - \lambda Q_c)^2 da\int M da - \left(\int M (Q_r - \lambda Q_c) da\right)^2 \right] \geq \\
    & \quad \left[ \int M Q_c da \int M(Q_r-\lambda Q_c)da - \int M da \int M(Q_r- \lambda Q_c) Q_c da \right]^2 .
    \label{eq:hessian_pivots}
\end{aligned}
\end{equation}

By lemma \ref{lemma:Lagrange_identity}, we have
\begin{align}
 &\int M Q_c^2 da\int M da - \left(\int M Q_c da\right)^2 \\
= &\frac{1}{2} \iint \left(\sqrt{M(a_1)}Q_c(s,a_1)\sqrt{M(a_2)}- \sqrt{M(a_2)}Q_c(s,a_2)\sqrt{M(a_1)} \right)^2 da_1 da_2\\
= &\frac{1}{2} \iint M(a_1)M(a_2)\left(Q_c(s,a_1)- Q_c(s,a_2)\right)^2 da_1 da_2, 
\end{align}
and
\begin{align}
&\int M (Q_r - \lambda Q_c)^2 da\int M da - \left(\int M (Q_r - \lambda Q_c) da\right)^2 \\
= &\frac{1}{2} \iint \left(\sqrt{M(a_1)}(Q_r(s,a_1)-\lambda Q_c(s,a_1))\sqrt{M(a_2)}- \sqrt{M(a_2)}(Q_r(s,a_2)-\lambda Q_c(s,a_2))\sqrt{M(a_1)} \right)^2 da_1 da_2\\
= &\frac{1}{2} \iint M(a_1)M(a_2)\Big[(Q_r(s,a_1)-\lambda Q_c(s,a_1))- (Q_r(s,a_2)-\lambda Q_c(s,a_2))\Big]^2 da_1 da_2. 
\end{align}
Therefore, apply Cauchy–Schwarz inequality to the LHS of Eq. (\ref{eq:hessian_pivots}), we have

\begin{align}
\text{LHS} &= \frac{1}{2} \iint M(a_1)M(a_2)\left(Q_c(s,a_1)- Q_c(s,a_2)\right)^2 da_1 da_2 \\ 
& \quad\quad \cdot \frac{1}{2} \iint M(a_1)M(a_2)\Big[(Q_r(s,a_1)-\lambda Q_c(s,a_1))- (Q_r(s,a_2)-\lambda Q_c(s,a_2))\Big]^2 da_1 da_2 \\
&\geq \frac{1}{4} \left[\iint M(a_1)M(a_2)(Q_c(s,a_1)- Q_c(s,a_2)) \right. \\
& \quad\quad \left. \cdot \left((Q_r(s,a_1)-\lambda Q_c(s,a_1))- (Q_r(s,a_2)-\lambda Q_c(s,a_2))\right) da_1 da_2\right]^2, \text{(by Cauchy–Schwarz inequality)}\\
& = \frac{1}{4} \Big[\iint M(a_1)M(a_2)\big[Q_c(s,a_1)(Q_r(s,a_1)-\lambda Q_c(s,a_1))- Q_c(s,a_2)(Q_r(s,a_1)-\lambda Q_c(s,a_1))  \\
& \quad\quad - Q_c(s,a_1)(Q_r(s,a_2)-\lambda Q_c(s,a_2)) + Q_c(s,a_2)(Q_r(s,a_2)-\lambda Q_c(s,a_2))\big] da_1 da_2\Big]^2\\
& = \frac{1}{4} \left[2\iint M(a_1)M(a_2)\left(Q_c(s,a_1)(Q_r(s,a_1)-\lambda Q_c(s,a_1))- Q_c(s,a_1)(Q_r(s,a_2)-\lambda Q_c(s,a_2)) \right) da_1 da_2\right]^2\\
& = \left[\int M(a_1)Q_c(s,a_1)(Q_r(s,a_1)-\lambda Q_c(s,a_1))da_1 \int M(a_2)da_2 \right. \\
& \quad\quad \left.- \int M(a_1) Q_c(s,a_1) da_1 \int M(a_2)(Q_r(s,a_2)-\lambda Q_c(s,a_2)) da_2\right]^2 = \text{RHS}.
\end{align}
Note that we use the Cauchy-Schwarz inequality in a middle step. 
We could easily check that when $Q_c$ is a constant function or $\lambda \xrightarrow[]{} +\infty$, then the equality holds.
Otherwise, the equality holds if and only if:
\begin{equation}
    \frac{(Q_r(s,a_1)-\lambda Q_c(s,a_1))- (Q_r(s,a_2)-\lambda Q_c(s,a_2))}{Q_c(s,a_1) - Q_c(s,a_2)} = \frac{Q_r(s,a_1)- Q_r(s,a_2)}{Q_c(s,a_1) - Q_c(s,a_2)} - \lambda = C,
\label{eq:strict_convex_condition}
\end{equation}

where $C$ is a constant. It could be achieved by setting $Q_r$ be a constant function or let $Q_r$ be proportional to $Q_c$. 
We summarize the above conditions as follows: (1) $Q_r(s,\cdot)$ or $Q_c(s,\cdot)$ is a constant function; or (2) $\exists C\in\mathbb{R}, s.t. \forall a, Q_r(s,a) = C  Q_c(s,a)$; or (3) $\lambda \xrightarrow[]{} + \infty$. Each condition has corresponding meaning, as we explain in the following remarks.
\end{proof}
\begin{Remark}
For the first condition, if $Q_r(s,\cdot)$ is a constant function, then the objective in (\ref{eq:estep_objective}) will always be a constant --- the optimization becomes meaningless since all the distributions that within the trust region should be the same; if $Q_c(s,\cdot)$ is a constant function, then the safety constraint will be inactive, since no policy could change the feasibility status. Note that the Hessian element $\rmH_{1,1}$ will be 0 if $Q_c(s,\cdot)$ is a constant, which means that the gradient of the safety constraint dual variable $\lambda$ will either be a positive constant when the problem is feasible (tends to make $\lambda\xrightarrow[]{} 0$) or a negative constant when the problem is infeasible (tends to make $\lambda\xrightarrow[]{} +\infty$). 
\end{Remark}
\begin{Remark}
The second condition indicates that if the task reward $Q_r$ value is proportional to the cost $Q_c$ value everywhere, then multiple optimal solutions may exist on the KL constraint boundary or the safety constraint boundary. Intuitively, the problem becomes a bounded convex optimization --- maximizing the risks $\E_q[Q_c(s,a)]$ within the trust region defined by $\epsilon_2$ and a ball defined by $\epsilon_1$. Though the solutions may not be unique, the optimality could still be guaranteed as long as the Slater's condition \ref{assumption:slater} holds.
\end{Remark}
\begin{Remark}
The third condition is equivalent to the violation of our Slater's condition assumption \ref{assumption:slater} --- no distribution exists within the trust region that satisfies the safety constraints. In that sense, the gradient of the dual variable $\lambda$ will always be negative, and thus $\lambda \xrightarrow[]{} +\infty$, which tends to impose huge penalties on safety critics $Q_c$. In practice, this rarely happens since we could select a large KL threshold for E-step and choose a much smaller trust region size for M-step, such that $n$-step robustness could be guaranteed and the Slater's condition could be easily met, as we show in Appendix \ref{app:n_step_robustness}.
\end{Remark}

In summary, $\Bar{g}(\lambda, \eta)$ is convex by Lemma \ref{proposition:Hessian} and strictly convex when the equality does not hold in Eq. (\ref{eq:strict_convex_condition}). 
The strict convexity analysis ensures the uniqueness of the optimal solution in most situations and provides some other theoretical implications of the E-M procedure, such as the convergence to a stationary point \cite{sriperumbudur2009integral}.
In addition, as long as the Slater's condition in Assumption \ref{assumption:slater} holds, we could obtain the optimal solution of the dual problem efficiently via any convex optimization techniques, though multiple optimal solutions may exist if Eq. (\ref{eq:strict_convex_condition}) holds.

\subsection{Proof of the M-step --- KL regularized policy improvement}
\label{app:mstep}
As shown in section~\ref{sec:safe_inference} and \cite{abdolmaleki2018maximum}, given the optimal variational distribution $q_i^*$ from the E-step, the M-step objective is
\begin{equation}
\begin{split}
\theta_{i+1} = \arg\max_\theta \E_{\rho_q}\Big[\alpha \E_{q^*_i(\cdot|s)} \big[ \log \pi_\theta (a|s)\big]\Big] + \log p(\theta)
\end{split}
\label{eq:app_mstep}
\end{equation}
which is a Maximum A-Posteriori (MAP) problem.
Consider a Gaussian prior around the old policy parameter $\theta_i$, we have 
$$\theta \sim \Ncal(\theta_i, \frac{ F_{\theta_i}}{\alpha\beta})$$
where $F_{\theta_i}$ is the Fisher information matrix and $\beta$ is a positive constant. 
With the Gaussian prior, the objective (\ref{eq:app_mstep}) becomes
\begin{equation}
\begin{split}
\theta_{i+1} &= \arg\max_\theta \alpha  \E_{\rho_q}\Big[\E_{q^*_i(\cdot|s)} \big[ \log \pi_\theta (a|s)\big]\Big] - \alpha \beta(\theta-\theta_i)^TF_{\theta_i}^{-1}(\theta-\theta_i) \\
& = \arg\max_\theta  \E_{\rho_q}\Big[\E_{q^*_i(\cdot|s)} \big[ \log \pi_\theta (a|s)\big]\Big] - \beta(\theta-\theta_i)^TF_{\theta_i}^{-1}(\theta-\theta_i)
\end{split}
\end{equation}
where we could observe that $(\theta-\theta_i)^TF_{\theta_i}^{-1}(\theta-\theta_i)$ is the second order Taylor expansion of $\E_{\rho_q}\big[\KL(\pi_{\theta_i}(a|s)\Vert \pi_{\theta}(a|s)) \big]$. Thus, we could generalize the above objective to the KL-regularized one:
\begin{equation}
\begin{split}
    \max_\theta \E_{\rho_q}\Big[ \E_{q^*_i(\cdot|s)} \big[ \log \pi_\theta (a|s)\big] - \beta \KL(\pi_{\theta_i}\Vert \pi_{\theta}) \Big].
\end{split}
\end{equation}
Similar to the E-step, we could convert the soft KL regularizer to a hard KL constraint:
\begin{equation}
\begin{split}
    &\max_\theta \quad \E_{\rho_q}\Big[ \E_{q^*_i(\cdot|s)} \big[ \log \pi_\theta (a|s)\big]\Big] \\
    & s.t. \quad \E_{\rho_q}\big[ \KL(\pi_{\theta_i}(a|s)\Vert \pi_{\theta}(a|s)) \big] \leq \epsilon.
\end{split}
\end{equation}

\subsection{Proof of Proposition \ref{Proposition:improvement} --- the ELBO improvement guarantee}
\label{app:ELBO_improve}
Recall that the optimality of policy $\pi$ is lower bounded by the following ELBO objective
$$
\Jcal(q,\theta) = \E_{\tau\sim q} \left[ \sum_{t=0}^\infty \left(\gamma^t r_t - \alpha\KL(q(\cdot| s_t)  \| \pi_\theta(\cdot|s_t))\right)\right] + \log p(\theta).
$$
We improve the policy by optimizing the ELBO alternatively via EM. Thus, we will prove the monotonic improvement guarantee of ELBO at the $i$-th training iteration with the following assumption.
\begin{Assumption}
\label{assumption:two_e_step}
The Slater's condition holds for both $\pi_{\theta_{i-1}}$ and $\pi_{\theta_i}$.
\end{Assumption}
The above assumption indicates a well-optimized policy in the M-step. In addition, it ensures the variational distributions $q_{i-1}$ and $q_{i}$ are feasible. Note that an infeasible variational distribution $q_{i-1}$ may lead to arbitrarily high reward return.
With this assumption, we prove the ELBO improvement for E-step and M-step separately.

\begin{proof}
\textbf{E-step}: By the definition of E-step, we improve the ELBO w.r.t $q$. Since $q_{i-1}$ is feasible (reward return is bounded) and $\pi_{\theta_i}$ satisfies the Slater's condition, we can prove the E-step update will increase ELBO by Theorem~\ref{theorem:1}: 
$$
\begin{aligned}
&\begin{cases}
\begin{aligned}
q_{i} &= \argmax_{q\in \Pi_{\Qcal}^{\epsilon_1}} \E_{\rho_q}\left[\E_{a\sim q(\cdot|s)} \left[ Q_r^{q_i}(s,a)\right] - \alpha \KL[q(\cdot|s) \| \pi_{\theta_i}(\cdot|s)] \right]  \quad \text{(By Slater's condition for $\pi_{\theta_i}$)} \\
&= \argmax_{q\in \Pi_{\Qcal}^{\epsilon_1}} \E_{\tau\sim q} \left[ \sum_{t=0}^\infty \left(\gamma^t r_t - \alpha\KL(q(\cdot| s_t)  \| \pi_\theta(\cdot|s_t))\right)\right]\\
&= \argmax_{q\in \Pi_{\Qcal}^{\epsilon_1}} \Jcal(q,\theta_i)
\end{aligned}\\
q_{i-1} \in \Pi_{\Qcal}^{\epsilon_1} \quad \text{(By Slater's condition for $\pi_{\theta_{i-1}}$)}
\end{cases}\\
\Rightarrow & \Jcal(q_i,\theta_i) \geq \Jcal(q_{i-1},\theta_i).
\end{aligned}
$$
Therefore, as long as assumption \ref{assumption:two_e_step} holds, the ELBO will increase monotonically in terms of $q$.

\textbf{M-step}: By definition in Eq.~(\ref{eq:mstep}), we update $\theta$ by
$$
\begin{aligned}
\theta_{i+1} &= \argmax_{\theta} \E_{\rho_{q_i}}\left[\alpha \E_{a\sim q_i(\cdot|s)} \left[ \log \pi_\theta (a|s)\right]\right] + \log p(\theta) \\
&= \argmax_{\theta} \E_{\rho_{q_i}}\left[- \alpha\KL[q_i(\cdot| s_t)  \Vert \pi_\theta(\cdot|s)]\right] + \log p(\theta) \\
&= \argmax_{\theta} \E_{\rho_{q_i}} \left[ \E_{a\sim q_i(\cdot|s)} \big[ Q_r^{q_i}(s,a)\big] - \alpha\KL[q_i(\cdot| s_t)  \Vert \pi_\theta(\cdot|s)]\right] + \log p(\theta)\\
&= \argmax_{\theta} \Jcal(q_i, \pi_{\theta}).
\end{aligned}
$$
Therefore, we have:
$
\Jcal(q_i, \pi_{\theta_{i+1}}) \geq \Jcal(q_{i}, \pi_{\theta_{i}}) 
$. 
Combining all the above together, we have
$$
\Jcal(q_i, \pi_{\theta_{i+1}}) \geq \Jcal(q_{i}, \pi_{\theta_{i}}) \geq \Jcal(q_{i-1}, \pi_{\theta_{i}}) .
$$
\end{proof}

\begin{Remark}
The Slater's condition assumption is critical for monotonic policy improvement guarantee, since as we have shown in Appendix \ref{app:dual_function} Remark 4, violation of Slater's condition may lead to large penalty for constraint violations, and then in E-step the variational distribution is updated to reduce the cost or reconcile to the feasible set instead of improving the reward return. However, the $n$-step robustness (Appendix \ref{app:n_step_robustness}) guarantees the updated policies in $n$ consecutive iterations satisfy the Slater's condition if we choose proper KL constraint thresholds and the initial policy is feasible, which indicates a monotonic ELBO improvement guarantee during these $n$ policy updating iterations.
\end{Remark}

\subsection{Proof of Proposition \ref{Proposition:cost_bound} --- worst-case constraint satisfaction bound}
\label{app:worst_case}
The Corollary 2 and 3 in CPO~\cite{achiam2017constrained} connect the difference in cost returns between two policies to the divergence between them,
\begin{equation}
\begin{aligned}
J_c(\pi') - J_c(\pi) \leq \frac{1}{1-\gamma} \E_{s\sim \rho_{\pi}, a\sim \pi'} [A_c^{\pi}(s,a)] + \frac{ 2 \gamma \delta_{c}^{\pi'} }{(1-\gamma)^2}\sqrt{\frac{1}{2}\E_{s\sim \rho_{\pi}}\left[\KL[\pi'(\cdot|s) \| \pi(\cdot|s)]\right]} 
\end{aligned}
\end{equation}
where $\delta_{c}^{\pi'} = \max_s \left| \E_{a\sim \pi'}[A_c^{\theta_i}(s,a)] \right|$, and  $A_c^{\theta_i}(s,a)$ denotes the advantage function of cost.

Note that the M-step does not involve cost constraint when updating $\theta$. Therefore, we obtain the following worse-case bound:
\begin{equation}
\begin{aligned}
J_c(\pi_{\theta_{i+1}}) &\leq J_c(\pi_{\theta_{i}}) + \frac{1}{1-\gamma} \E_{s\sim \rho_{\pi_{\theta_{i}}}, a\sim \pi_{\theta_{i+1}}} [A_c^{\pi_{\theta_{i}}}(s,a)] + \frac{ 2 \gamma \delta_{c}^{\pi_{\theta_{i+1}}} }{(1-\gamma)^2}\sqrt{\frac{1}{2}\E_{s\sim \rho_{\pi_{\theta_{i}}}}\left[\KL[\pi_{\theta_{i+1}}(\cdot|s) \| \pi_{\theta_{i}}(\cdot|s)]\right]} \\
& \leq J_c(\pi_{\theta_{i}}) + \frac{1}{1-\gamma} \delta_{c}^{\pi_{\theta_{i+1}}} + \frac{ 2 \gamma \delta_{c}^{\pi_{\theta_{i+1}}} }{(1-\gamma)^2}\sqrt{\frac{1}{2}\epsilon} \\
& = J_c(\pi_{\theta_{i}}) + \frac{[(1-\gamma)+\sqrt{2\epsilon}\gamma]\delta_c^{\pi_{\theta_{i+1}}}}{(1-\gamma)^2}.
\end{aligned}
\end{equation}

When the old policy $\pi_{\theta_{i}}\in \Pi_{\Qcal}^{\epsilon_1}$, we further have
\begin{equation}
J_c(\pi_{\theta_{i+1}}) \leq \epsilon_1 + \frac{[(1-\gamma)+\sqrt{2\epsilon}\gamma]\delta_c^{\pi_{\theta_{i+1}}}}{(1-\gamma)^2}.
\end{equation}

\subsection{Proof and discussion of Proposition \ref{Proposition:slater_gurantee} --- policy updating robustness}
\label{app:n_step_robustness}
Recall the Proposition \ref{Proposition:slater_gurantee} --- 
Suppose $\pi_{\theta_{i}} \in \Pi_\Qcal^{\epsilon_1}$. $\pi_{\theta_{i+1}}$ and $\pi_{\theta_{i}}$ are related by the M-step. If $\epsilon < \epsilon_2$, where $\epsilon, \epsilon_2$ are the KL threshold in M-step and E-step respectively, then the variational distribution $q_{i+1}^*$ in the next iteration is guaranteed to be feasible and optimal.

\begin{proof}
Since $\epsilon < \epsilon_2$, the KL divergence between $\pi_{\theta_{i+1}}$ and $\pi_{\theta_i}$ $\KL(\pi_{\theta_{i+1}} \Vert \pi_{\theta_i}) \leq \epsilon < \epsilon_2$. Thus, the Slater condition \ref{assumption:slater} holds for $\pi_{\theta_{i+1}}$ as long as $\pi_{\theta_{i}}$ is feasible, because at least one feasible solution $\pi_{\theta_{i}}$ within the trust region exists. By Theorem \ref{theorem:1}, we know that $q_{i+1}^*$ in the E-step is guaranteed to be feasible and optimal.
\end{proof}

\begin{Remark}
Proposition \ref{Proposition:slater_gurantee} is useful in practice, since it allows the policy to recover to the feasible region from perturbations in M-step. The perturbations may come from bad function approximation errors in M-step or noisy Q estimations, which may happen occasionally for black-box function approximators such as neural network. The robustness guarantee holds even under the worst-case scenario with adversarial perturbations.
\end{Remark}

\textbf{Figure illustrations}.
Fig. \ref{fig:robustness_a} demonstrates the example of one EM iteration that is subject to approximation errors in the M-step. The green area represents the feasible region in the parameter space, the yellow ellipsoid is the trust region in the M-step, and dashed blue circles are the trust region size in the E-step. The righter region has the higher reward in this counter example. Ideally, the updated policy should be the intersection point of the $\pi_{\theta_i}$ -- $q_i^*$ line and the yellow ellipsoid. However, due to the approximation errors in the M-step, the updated policy $\pi_{\theta_{i+1}}$ might be away from the correction updating direction. Fig. \ref{fig:robustness_b} shows the worst-case scenario, where the updating direction is totally orthogonal to the correct one. However, due to the smaller trust region size of M-step than E-step, the Slater condition still holds -- a feasible policy exist within the trust region of $\pi_{\theta_{i+1}}$ that is specified by $\epsilon_2$. So, we could guarantee to obtain an optimal and feasible non-parametric variational distribution at the $(i+1)$-th iteration -- the policy still has the chance to recover to the feasible region.

\begin{figure}[h]
\centering     
\subfigure[Regular policy updating]{\label{fig:robustness_a}\includegraphics[width=0.45\linewidth]{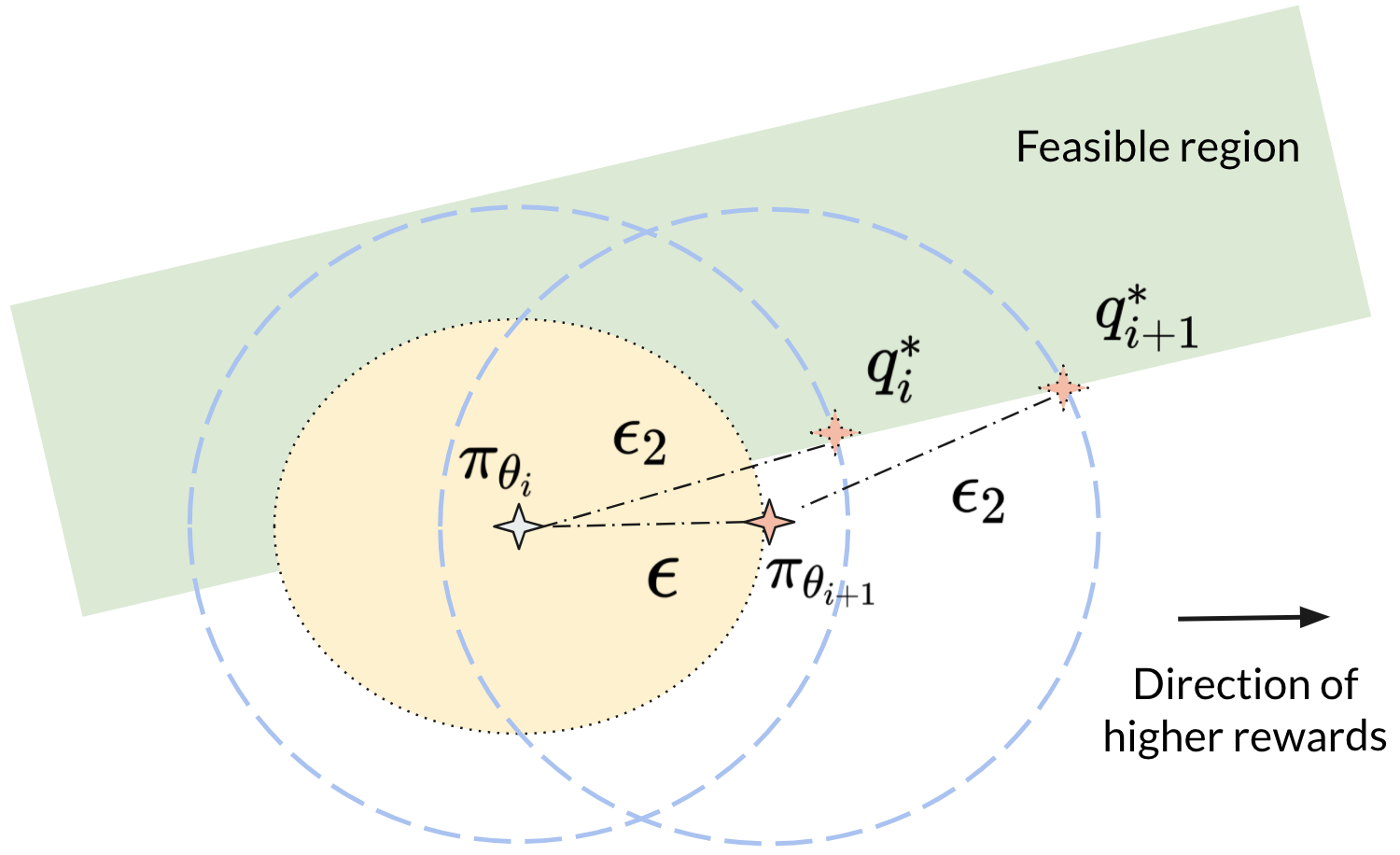}}
\subfigure[Worst-case policy updating]{\label{fig:robustness_b}\includegraphics[width=0.45\linewidth]{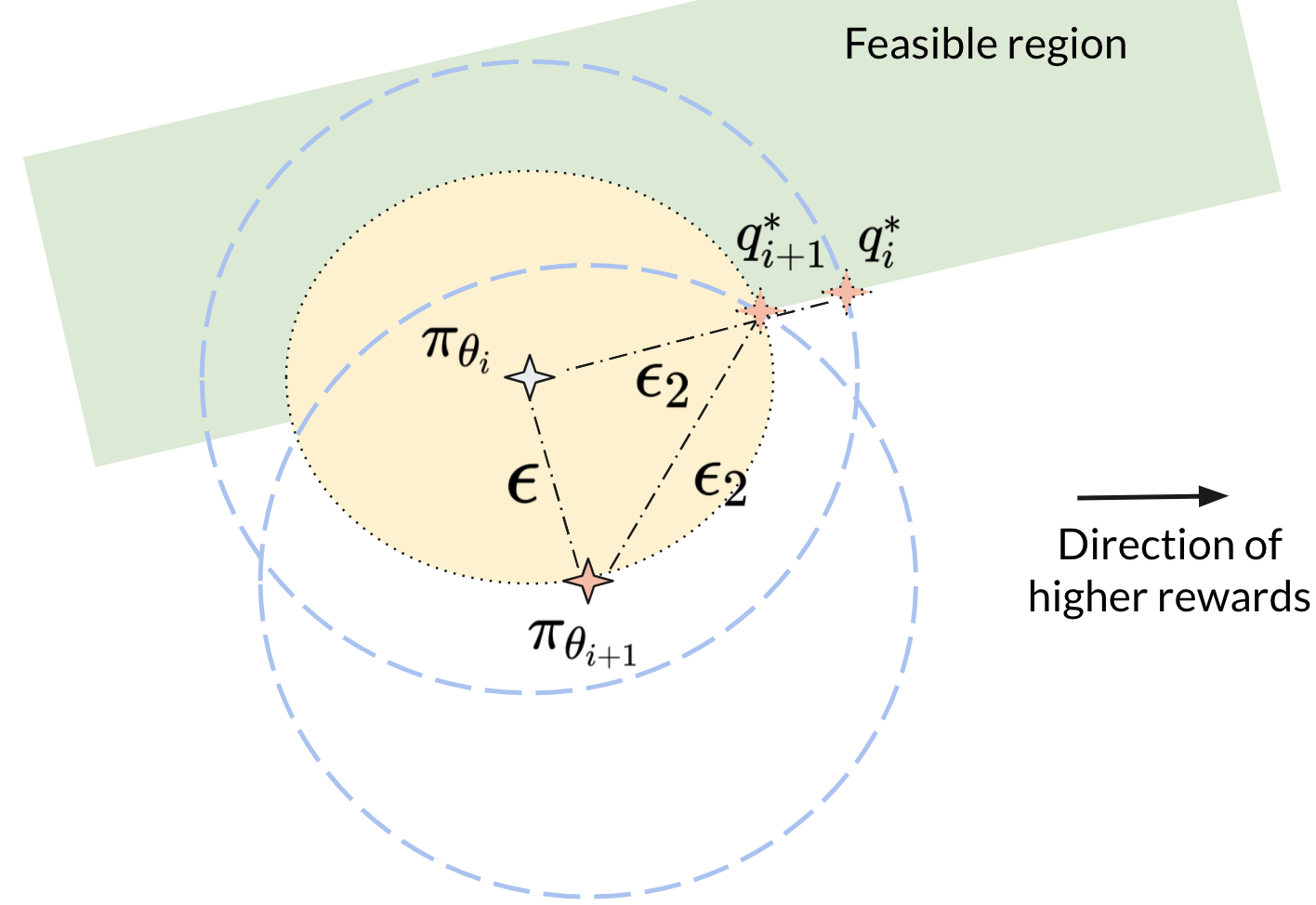}}
\caption{Illustration of the policy updating at the $i$-th iteration under the M-step approximation error.}
\end{figure}

\textbf{Extending the one-step robustness guarantee to multiple steps.} A natural follow-up question for Proposition \ref{Proposition:slater_gurantee} is that whether we can achieve multiple steps policy updating robustness guarantees -- can the policy recover to feasible region with $n>1$ steps adversarial/worst-case M-step policy updating? 
Since the $n$-step ($n>1$) guarantees may require the Triangular inequality to be satisfied for consecutive updated policies, but it may not hold for KL divergence in general cases. However, we found that if the policy $\pi_\theta$ is of a multivariate Gaussian form, which is commonly used for continuous action space tasks in practice, then a relaxed Triangular inequality holds -- see Theorem 4 in \cite{zhang2021properties}.
Thus, we hypothesize that with a Gaussian policy, $n$-step robustness could be achieved with sufficiently small $\epsilon = f(\epsilon_2, n)$ with the function $f$.
Next, we provide a proof for two steps robustness.

\begin{Proposition}
Suppose $\pi_{\theta_{i}} \in \Pi_\Qcal^{\epsilon_1}$ and $\pi$ is a Multivariate Gaussian policy. If $\epsilon < \frac{\epsilon_2}{8}$, where $\epsilon, \epsilon_2$ are the KL threshold in M-step and E-step respectively, then the variational distribution $q_{i+2}^*$ in the $(i+2)$-th iteration is guaranteed to be feasible and optimal.
\end{Proposition}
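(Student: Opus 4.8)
The plan is to reduce the two-step claim to a single verification of Slater's condition (Assumption~\ref{assumption:slater}) for the twice-updated policy $\pi_{\theta_{i+2}}$, after which Theorem~\ref{theorem:1} immediately delivers the feasibility and optimality of $q_{i+2}^*$, exactly as in the one-step argument behind Proposition~\ref{Proposition:slater_gurantee}. The natural feasible witness is the initial policy $\pi_{\theta_i}$, which belongs to $\Pi_\Qcal^{\epsilon_1}$ by hypothesis; so it suffices to show that $\pi_{\theta_i}$ lies inside the E-step trust region of $\pi_{\theta_{i+2}}$, i.e. that $\KL(\pi_{\theta_i}\Vert\pi_{\theta_{i+2}})<\epsilon_2$.

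First I would chain the two M-step constraints (\ref{eq:m_step_kl}) along the update path, which give $\KL(\pi_{\theta_i}\Vert\pi_{\theta_{i+1}})\le\epsilon$ and $\KL(\pi_{\theta_{i+1}}\Vert\pi_{\theta_{i+2}})\le\epsilon$. Since the KL divergence violates the ordinary triangle inequality, these two bounds cannot simply be added; this is precisely where the multivariate Gaussian assumption is needed. I would then invoke the relaxed triangle inequality for Gaussian KL divergence (Theorem~4 of \cite{zhang2021properties}), which in square-root form reads $\sqrt{\KL(P\Vert R)}\le\sqrt{2}\big(\sqrt{\KL(P\Vert Q)}+\sqrt{\KL(Q\Vert R)}\big)$. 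Applying it with $P=\pi_{\theta_i}$, $Q=\pi_{\theta_{i+1}}$, $R=\pi_{\theta_{i+2}}$ and substituting the two bounds yields $\sqrt{\KL(\pi_{\theta_i}\Vert\pi_{\theta_{i+2}})}\le\sqrt{2}(\sqrt{\epsilon}+\sqrt{\epsilon})=2\sqrt{2\epsilon}$, hence $\KL(\pi_{\theta_i}\Vert\pi_{\theta_{i+2}})\le 8\epsilon$. The hypothesis $\epsilon<\epsilon_2/8$ then gives $\KL(\pi_{\theta_i}\Vert\pi_{\theta_{i+2}})<\epsilon_2$, establishing Slater's condition for $\pi_{\theta_{i+2}}$ and closing the argument through Theorem~\ref{theorem:1}.

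The main obstacle is the relaxed triangle inequality itself. Because KL is asymmetric, I must ensure the cited version preserves the specific directions appearing in the chain: at every M-step the constraint runs from the \emph{old} policy to the \emph{new} one, so the arguments must line up as $i\to i{+}1$, $i{+}1\to i{+}2$, and finally $i\to i{+}2$. Swapping any argument would break the telescoping, and it is the Gaussian structure that rules out the pathological blow-up a relaxed inequality could exhibit for general distributions. I would also confirm that the multiplicative constant in \cite{zhang2021properties} is exactly the one that produces the factor $8$, and hence the threshold $\epsilon_2/8$ stated in the proposition.

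A final technical point concerns state-averaging: the M-step constraint (\ref{eq:m_step_kl}) bounds the expectation $\E_{\rho_q}[\KL(\cdot\Vert\cdot)]$ rather than the state-wise divergence. I would apply the relaxed triangle inequality pointwise in $s$ and then pass to expectations, using concavity of the square root (Jensen) to transfer the per-state estimate into the averaged quantity that feeds the Slater check. This mirrors the state-wise treatment already used for the one-step guarantee in Proposition~\ref{Proposition:slater_gurantee}.
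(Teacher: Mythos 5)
Your proposal follows essentially the same route as the paper: reduce the claim to verifying Slater's condition for $\pi_{\theta_{i+2}}$ with the feasible witness $\pi_{\theta_i}$, chain the two M-step KL bounds through a relaxed triangle inequality for multivariate Gaussians (Theorem~4 of \cite{zhang2021properties}), conclude $\KL(\pi_{\theta_i}\Vert\pi_{\theta_{i+2}})\leq 8\epsilon<\epsilon_2$, and invoke Theorem~\ref{theorem:1}.

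The one substantive caveat is the form in which you invoke the key lemma. The cited Theorem~4 is not stated in the clean square-root form $\sqrt{\KL(P\Vert R)}\le\sqrt{2}\bigl(\sqrt{\KL(P\Vert Q)}+\sqrt{\KL(Q\Vert R)}\bigr)$; it is an inequality involving the Lambert $W_0$ and $W_{-1}$ branches evaluated at $-e^{-1-2\epsilon}$, and the paper extracts the $8\epsilon$ bound only by expanding those branches around $-1/e$ and discarding $O(\epsilon^{1.5})$ terms, i.e.\ the constant $8$ is obtained asymptotically for sufficiently small $\epsilon$, not as an exact consequence of a $\sqrt{2}$-relaxed triangle inequality. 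Your version happens to reproduce the same constant, and you do flag that the constant needs checking, but as written the load-bearing step rests on a restatement of the lemma that the reference does not literally provide; to make the argument airtight you would need to either carry out the Lambert-$W$ expansion as the paper does (accepting the ``sufficiently small $\epsilon$'' qualifier) or actually prove the square-root form for Gaussians. Your additional remark about passing from per-state KL bounds to the state-averaged constraint via Jensen is a reasonable point of care that the paper glosses over.
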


\begin{proof}
Denote $W_0$ and $W_{-1}$ be the 0, -1 branches of the Lambert $W$ function, respectively. We use $\pi_i$ denote $\pi_{\theta_i}$ for simplicity.
Since $\KL( \pi_{i} \Vert \pi_{i+1}) \leq \epsilon$ and $\KL( \pi_{i+1} \Vert \pi_{i+2}) \leq \epsilon$, we have the following relaxed Triangular inequality holds (Theorem 4 in \cite{zhang2021properties}):
\begin{align}
  \KL( \pi_i \Vert\pi_{i+2}) < 2\epsilon + \frac{1}{2}\Bigg(
  \left(W_{-1}(-e^{(-1-2\epsilon)}) + 1 \right)^2 - W_{-1}(-e^{(-1-2\epsilon)})
  \left(  \sqrt{\frac{2\epsilon}{- W_{0}(-e^{(-1-2\epsilon)})}} + \sqrt{2\epsilon}
  \right)^2
  \Bigg).
\end{align}
Note that $W_0(-1/e) = W_{-1}(-1/e)= -1$ and for sufficiently small $\epsilon$, $W_{-1}(-e^{(-1-2\epsilon)})$ and $W_{0}(-e^{(-1-2\epsilon)})$ are arbitrarily close to $-1$ by the following series \cite{corless1996lambertw}
\begin{align}
W_{-1}(-e^{(-1-2\epsilon)}) = -1 - 2\sqrt{\epsilon} + O(\epsilon); \quad W_{0}(-e^{(-1-2\epsilon)}) = -1 + 2\sqrt{\epsilon} - O(\epsilon).
\end{align}
So we have
\begin{align}
  \left(W_{-1}(-e^{(-1-2\epsilon)}) + 1 \right)^2 = (- 2\sqrt{\epsilon} + O(\epsilon))^2 = 4\epsilon - O(\epsilon^{1.5})
\end{align}


\begin{align}
  - W_{-1}(-e^{(-1-2\epsilon)})
  \left(  \sqrt{\frac{2\epsilon}{- W_{0}(-e^{(-1-2\epsilon)})}} + \sqrt{2\epsilon}
  \right)^2  &= (1+2\sqrt{\epsilon} + O(\epsilon)) \left(  \sqrt{\frac{2\epsilon}{1-2\sqrt{\epsilon} + O(\epsilon)}} + \sqrt{2\epsilon}
  \right)^2 \\
  & \leq (1+2\sqrt{\epsilon} + O(\epsilon)) \left(  \frac{4\epsilon}{1-2\sqrt{\epsilon} + O(\epsilon)} + 4\epsilon
  \right) \\
  & =  8\epsilon + O(\epsilon^{1}) + O(\epsilon^{1.5}).
\end{align}

Then we obtain the following bound by ignoring the high order terms for sufficiently small $\epsilon$:
\begin{align}
  \KL( \pi_i \Vert\pi_{i+2}) < 2\epsilon + \frac{1}{2}(4\epsilon+8\epsilon) = 8\epsilon.
\end{align}
Therefore, we could conclude that for sufficiently small trust-region size, and $\epsilon < \frac{\epsilon_2}{8}$, we could obtain two steps robustness guarantee -- no matter how worse are the M-step for two iterations, an optimal and feasible variational distribution could be solved, and the policy could be recovered to the safe region.
\end{proof}

While the above proof could be generalized to $n$-step robustness, we found that as the certified step $n$ increases, the trust region size $\epsilon$ in M-step has to be shrunk with a faster rate than $n$. In addition, we could observe that though smaller trust region size in M-step could improve the robustness, it will also reduce the training efficiency -- more training steps and samples are required to improve the policy. Therefore, there exists a natural robustness and efficiency trade-off in this context. As we have shown in the experiment section, we believe that one or two steps robustness should be enough to handle mild approximation errors in practice, since the MLE style objective in M-step will converge to the true distribution in probability (consistency) and achieve the lowest-possible variance of parameters (efficiency) asymptotically as the sample size increases~\cite{myung2003tutorial}. This principle also guides us to select reasonable batch size and particle size practically.


\clearpage
\section{Implementation Details}
\label{app:implementation_detail}
\subsection{Full algorithm}
\label{app:implementation}
Due to the page limit, we omit some implementation details in the main content. We will present the full algorithm and some implementation tricks in this section. Without otherwise statement, the critics' and policies'  parametrization is assumed to be neural networks (NN), while we believe other parametrization form should also work well.

\textbf{Critics update}. 
Denote $\phi_r$ as the parameters for the task reward critic $Q_r$, and $\phi_c$ as the parameters for the constraint violation cost critic $Q_c$. Similar to many other off-policy algorithms~\cite{lillicrap2015continuous}, we use a target network for each critic and the polyak smoothing trick to stabilize the training. Other off-policy critics training methods, such as Re-trace~\cite{munos2016safe}, could also be easily incorporated with CVPO training framework. Denote $\phi_r'$ as the parameters for the \textbf{target} reward critic $Q_r'$, and $\phi_c'$ as the parameters for the \textbf{target} cost critic $Q_c'$. Define $\Dcal$ as the replay buffer and $(s, a, s', r, c)$ as the state, action, next state, reward, and cost respectively. The critics are updated by minimizing the following mean-squared Bellman error (MSBE):
\begin{align}
  & L(\phi_r) = \E_{(s, a, s', r, c) \sim \Dcal}\Big[\left( Q_r(s, a) -  (r + \gamma \E_{a'\sim \pi}[ Q_r'(s', a') ] ) \right)^2 \Big]
  \label{eq:qr_loss}\\
  & L(\phi_c) = \E_{(s, a, s', r, c) \sim \Dcal} \Big[\left( Q_c(s, a) -  (c + \gamma \E_{a' \sim \pi}[ Q_c'(s', a') ] ) \right)^2 \Big].
  \label{eq:qc_loss}
\end{align}
Denote $\alpha_c$ as the critics' learning rate, we have the following updating equations:
\begin{align}
    & \phi_r \xleftarrow{} \phi_r - \alpha_c \nabla_{\phi_r} L(\phi_r) \\
    & \phi_c \xleftarrow{} \phi_c - \alpha_c \nabla_{\phi_c} L(\phi_c).
\end{align}

\textbf{M-step regularized policy improvement trick}. We use a Multivariate Gaussian policy in our implementation. As shown in MPO \cite{abdolmaleki2018maximum, abdolmaleki2018relative}, decoupling the KL constraint into two separate terms -- mean $\epsilon_\mu$ and covariance $\epsilon_\Sigma$, can yield better empirical performance. Denote $C_\mu = \E_{\rho_q}\big[ \frac{1}{2}\textbf{tr}(\Sigma^{-1}\Sigma_i)-n + \ln(\frac{\Sigma}{\Sigma_i}) \big]$ and $C_\mu = \E_{\rho_q}\big[ \frac{1}{2} (\mu-\mu_i)^T\Sigma^{-1}(\mu-\mu_i) \big]$, we have:
\begin{equation}
    \E_{\rho_q}\big[ \KL(\pi_{\theta_i}(a|s)\Vert \pi_{\theta}(a|s)) \big] = C_\mu + C_\Sigma.
\end{equation}
And thus the M-step objective could be written as the following Lagrangian function:
\begin{equation}
\begin{split}
    & L(\theta, \beta_\mu, \beta_\Sigma) =  \E_{\rho_q}\Big[ \E_{q^*_i(\cdot|s)} \big[ \log \pi_\theta (a|s)\big]\Big] + \beta_\mu(\epsilon_\mu - C_\mu) +  \beta_\Sigma(\epsilon_\Sigma - C_\Sigma).
\end{split}
\label{eq:m_step_dual_problem}
\end{equation}

By performing the gradient descend ascend algorithm over the dual variables $\beta_\mu, \beta_\Sigma$ and the policy parameters $\theta$ in Eq. (\ref{eq:m_step_dual_problem}) iteratively yields the KL-constrained policy improvement in a supervised learning fashion:
\begin{equation}
\begin{split}
    &\max_\theta \min_{\beta_\mu>0, \beta_\Sigma>0} \quad L(\theta, \beta_\mu, \beta_\Sigma).
\end{split}
\end{equation}
Denote $\alpha_\mu, \alpha_\Sigma, \alpha_\theta$ as the learning rate for $\beta_\mu, \beta_\Sigma, \theta$ respectively, we have the following updating equations:
\begin{align}
    &\beta_\mu \xleftarrow{} \beta_\mu - \alpha_\mu \frac{\partial L(\theta, \beta_\mu, \beta_\Sigma)}{\partial \beta_\mu} = \beta_\mu - \alpha_\mu (\epsilon_\mu - C_\mu) \label{eq:m_step_mean}\\
    &\beta_\Sigma \xleftarrow{} \beta_\Sigma - \alpha_\Sigma \frac{\partial L(\theta, \beta_\mu, \beta_\Sigma)}{\partial \beta_\Sigma} = \beta_\Sigma - \alpha_\Sigma (\epsilon_\Sigma - C_\Sigma) \label{eq:m_step_var}\\
    &\theta \xleftarrow{} \theta - \alpha_\theta \frac{\partial L(\theta, \beta_\mu, \beta_\Sigma)}{\partial \theta}\label{eq:m_step_theta}.
\end{align}

In practice, we also use a target policy network $\pi_{\theta'}$ to generate the covariance matrix $\Sigma$ and the current policy network $\pi_{\theta}$ to generate the mean vector $\mu$, such that the policy will not be easily collapsed to a local optimum. 

\textbf{Polyak averaging for the target networks}. The polyak averaging is specified by a weight parameter $\rho \in (0, 1)$ and updates the parameters with:
\begin{equation}
\begin{aligned}
  &\phi_r' = \rho \phi_r' + (1-\rho) \phi_r \\ &\phi_c' = \rho \phi_c' + (1-\rho) \phi_c \\
  &\theta' = \rho \theta' + (1-\rho) \theta .
\end{aligned}
\label{eq:polyak}
\end{equation}

With all the implementation tricks mentioned above, we present the full CVPO algorithm:

\begin{algorithm}[H]
\caption{CVPO Algorithm}
{\bfseries Input:} \raggedright rollouts $T$, M-step iteration number $M$, batch size $B$, particle size $K$, discount factor $\gamma$, polyak weight $\rho$, critics learning rate $\alpha_c$, policy learning rate $\alpha_\theta$, M-step dual variables' learning rates $\alpha_\mu, \alpha_\Sigma$, thresholds $\epsilon_\mu, \epsilon_\Sigma$ \par
{\bfseries Output:} \raggedright policy $\pi_\theta$ \par
\begin{algorithmic}[1] 
\STATE Initialize policy parameters $\theta, \theta'$, critics parameters $\phi_r, \phi_r', \phi_c, \phi_c'$ and replay buffer $\Dcal = \{\}$
\FOR{each training iteration}
\STATE Rollout $T$ trajectories by $\pi_\theta$ from the environment $\Dcal = \Dcal \cup \{(s, a, s', r, c)\}$

\STATE Sample $B$ transitions $\{(s_b, a_b, s_{b+1}, r_b, c_b)_{b=1,...,B}\}$ from the replay buffer $\Dcal$
\STATE $\triangleright$ \textit{E-step begins}
\STATE Update reward critic by Eq. (\ref{eq:qr_loss}): $\phi_r \xleftarrow{} \phi_r - \alpha_c \nabla_{\phi_r} L(\phi_r) $
\STATE Update cost critic by Eq. (\ref{eq:qc_loss}): $\phi_c \xleftarrow{} \phi_c - \alpha_c \nabla_{\phi_c} L(\phi_c) $
\FOR{$b=1,..., B$}
\STATE Sample $K$ actions $\{a_1,...,a_K\}$ for $s_b$
\STATE Compute $\{Q_{r}^{\theta_i}(s_b,a_k), Q_{c}^{\theta_i}(s_b,a_k); k=1,...,K\}$
\ENDFOR
\STATE Compute optimal dual variables $\eta^*, \lambda^*$ by solving the convex optimization problem (\ref{eq:dual_opt})
\STATE Compute the optimal variational distribution for each state $\{q^*(\cdot|s_b); b=1,...,B\}$ by Eq. (\ref{eq:optimalq_theorem})
\STATE Normalize the variational distribution $\{q^*(\cdot|s_b); b=1,...,B\}$ for each state
\STATE $\triangleright$ \textit{M-step begins}
\FOR{M-step iterations $m=1,..., M$}
\STATE Perform one gradient step for $\beta_\mu$ via Eq. (\ref{eq:m_step_mean}) and for $\beta_\Sigma$ via Eq. (\ref{eq:m_step_var})
\STATE Perform one gradient step for policy parameters via Eq. (\ref{eq:m_step_theta}): $\theta \xleftarrow{} \theta - \alpha_\theta \frac{\partial L(\theta, \beta_\mu, \beta_\Sigma)}{\partial \theta}$
\ENDFOR
\STATE Polyak averaging target networks by Eq. (\ref{eq:polyak})
\ENDFOR
\end{algorithmic}
\end{algorithm}

Note that for off-policy methods, we need to convert the episodic-wise constraint violation threshold to a state-wise threshold for the $Q_c$ functions.
Denote $T$ as the episode length, the target cost limit for one episode is $\epsilon_T$. Denote the discounting factor as $\gamma$. Then, if we assume that at each time step we have equal probability to violate the constraint, the target constraint value $\epsilon_{1}$ for safety critic $Q_c^{\pi_\theta}$ could be approximated by:
$$
\epsilon_{1} =  \epsilon_T \times \frac{1-\gamma^T}{T(1-\gamma)}
$$
The converted threshold $\epsilon_1$ will be used to compute the Lagrangian multipliers for the baselines, and also be used as one of the constraint threshold in the E-step of our method:
$$
\int \pi(a|s) Q_c^{\pi_{\theta_i}}(s,a) \leq \epsilon_{1}, \quad\forall s, a
$$

\subsection{Experiment environments}
The task environment implementations are built upon SafetyGym (based on Mujoco)~\cite{ray2019benchmarking} and its PyBullet implementation~\cite{gronauer2022bullet}. 
We modified the original environment parameters for all the safe RL algorithms to make the training faster and save computational resources.
Particularly, we increase the simulation time-step and decrease the timeout steps for each environment, such that the agent can finish the tasks with fewer steps.
In addition, the Goal task in this paper is modified from the Button task, since we can then fix the layout of the goal buttons and obstacles to make the environment more deterministic. 
Note that the original SafetyGym implementation will random sample the layout for each episode, which greatly increase the training time and variance.
The proposed CVPO implementation also works for the original SafetyGym environments, but may require different set of hyper-parameters and much longer training time.
Though CVPO is sample-efficient, it is not very computational efficient, since we need to sample many particles in the E-step and solve a convex optimization problem, which is currently done on CPU via SciPy.

\subsection{Hyper-parameters}
\label{app:hyper-param}

The hyperparameters are shown in Table \ref{table:hyper-param}. More details can be found in the code.

\begin{table}[h]
\centering
\begin{tabular}{ cc }   
Common Hyperparameters & CVPO Hyperparameter \\  
\begin{tabular}{|c|c|}
\hline 
Policy network sizes & [256, 256] \\ \hline
Q network sizes & [256, 256] \\ \hline
Network activation & ReLU \\ \hline
Discount factor gamma $\gamma$ & 0.99 \\ \hline
Polyak weight $\rho$: & 0.995 \\ \hline
Batch size $B$: & 300 \\ \hline
Rollout trajectory number $T$ & 20 \\ \hline
Critics learning rate $\alpha_c$ & 0.001 \\ \hline
NN Optimizer & Adam \\ \hline
\end{tabular} &  
\begin{tabular}{|c|c|}
\hline 
Particle size $K$ & 32 \\ \hline
M-step iterations $M$ & 6 \\ \hline
Learning rate $\alpha_\mu$ & 1\\ \hline
Learning rate $\alpha_\Sigma$ & 100\\ \hline
Learning rate $\alpha_\theta$ & 0.002\\ \hline
E-step KL threshold $\epsilon_2$: & 0.1 \\ \hline
M-step KL threshold $\epsilon_\mu$: & 0.001 \\ \hline
M-step KL threshold $\epsilon_\Sigma$: & 0.0001 \\ \hline
E-step solver & SLSQP \\ \hline
\end{tabular} \\
\end{tabular}
\caption{Hyperparameters. Left: common hyperparameters for all methods. Right: hyperparameters that are specifically for CVPO.}
\label{table:hyper-param}
\end{table}

\section{Complete Experiment Results}
\label{app:experiment_result}

We present all the experiment results in this section.

\begin{figure}[!htb]
\centering     
\includegraphics[width=0.9\linewidth]{figures/training_offpolicy.png}
\caption{Training curves for off-policy baselines comparison. Each column corresponds to an environment. The curves are averaged over 10 random seeds, where the solid lines are the mean and the shadowed areas are the standard deviation.}
\end{figure}

\begin{figure}[!htb]
\centering     
\includegraphics[width=0.9\linewidth]{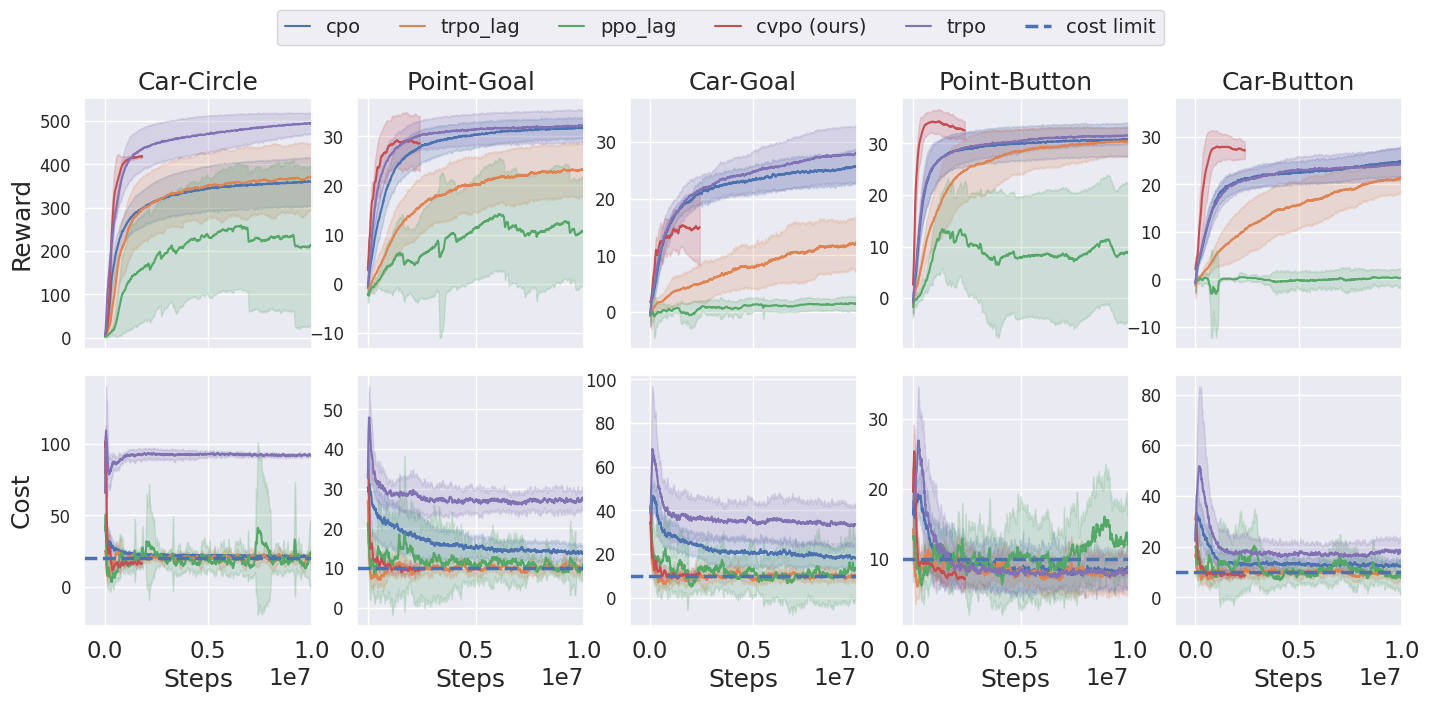}
\caption{Training curves for on-policy baselines comparison. Each column corresponds to an environment. The curves are averaged over 10 random seeds, where the solid lines are the mean and the shadowed areas are the standard deviation.}
\end{figure}

\begin{figure}[!h]
\centering     
\includegraphics[width=0.95\linewidth]{figures/reward_cost_all.png}
\caption{Reward versus cumulative cost (log-scale).}
\label{fig:reward_cost_all}
\end{figure}

\begin{figure}[h]
\centering     
\subfigure[]{\label{fig:robustness_a_all}\includegraphics[width=0.45\linewidth]{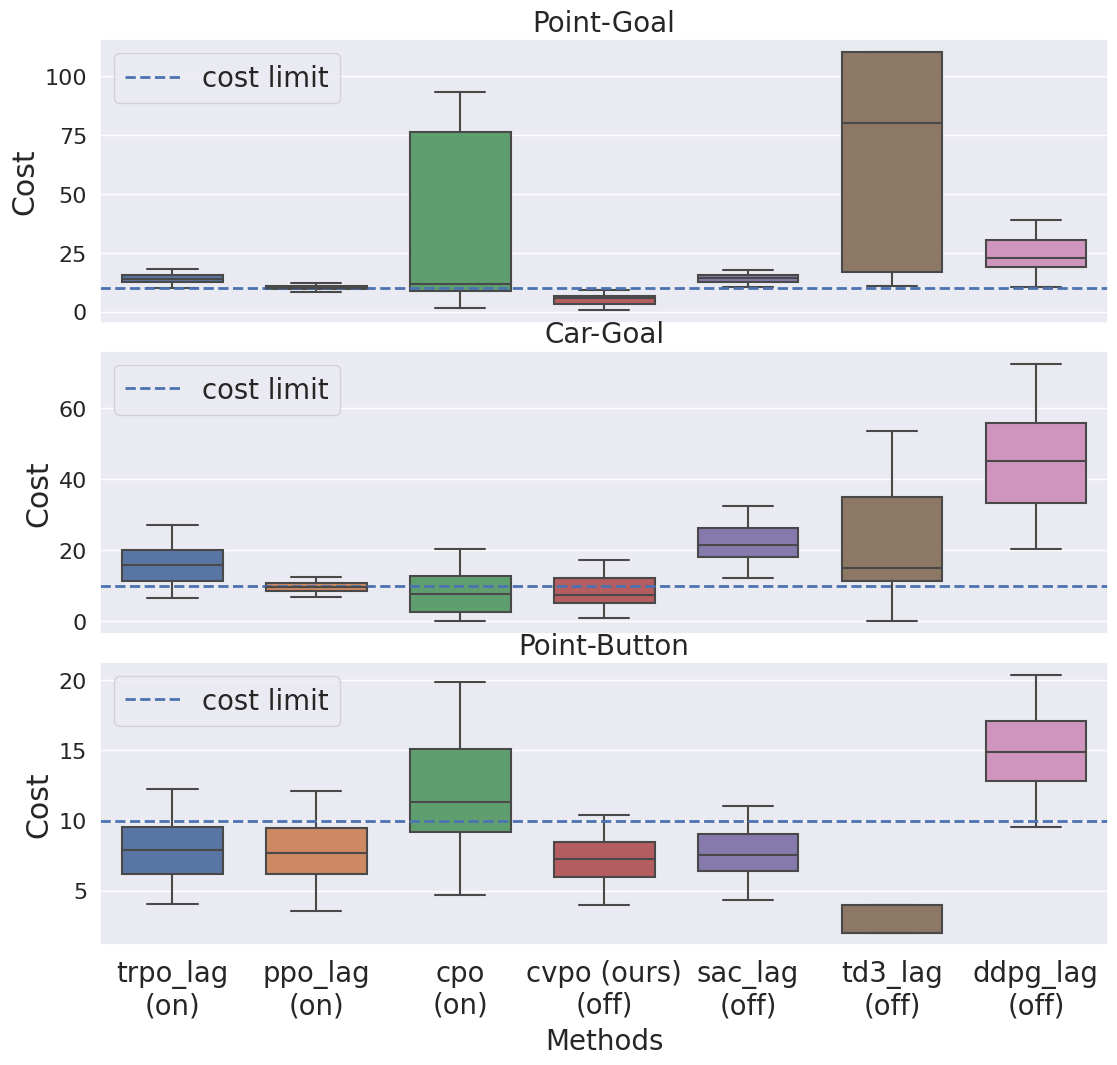}}
\subfigure[]{\label{fig:robustness_b_all}\includegraphics[width=0.45\linewidth]{figures/boxplot.png}}
\caption{Box plot of the convergence cost. (on) and (off) denotes on-policy and off-policy method, respectively.}
\end{figure}

\begin{figure}[!h]
\centering     
\includegraphics[width=0.95\linewidth]{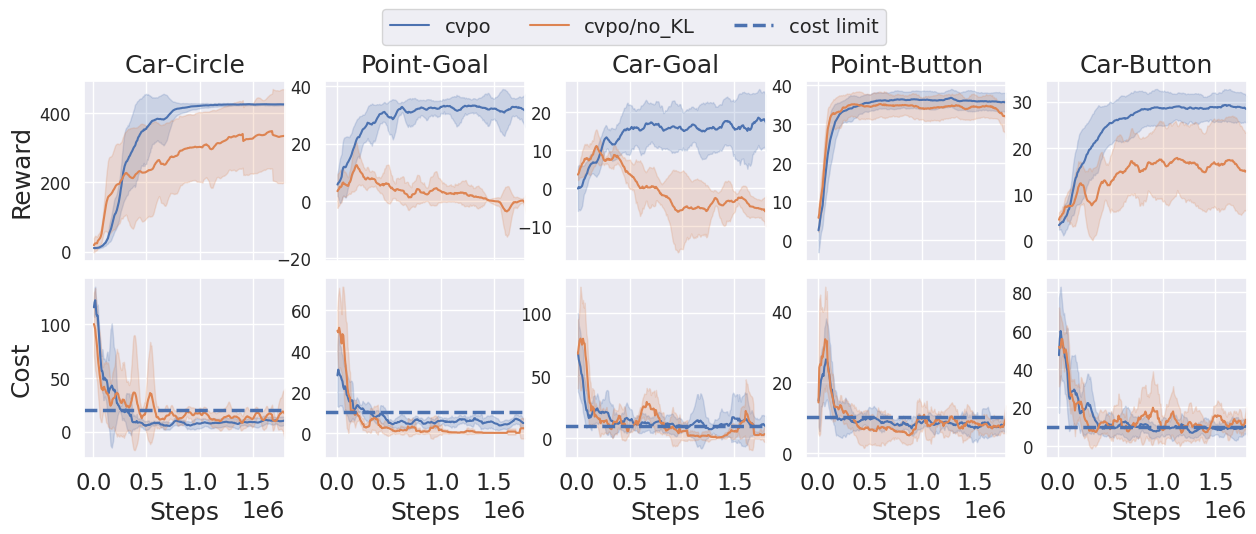}
\caption{Ablation study of the KL constraint in the M-step.}
\label{fig:kl_compare_all}
\end{figure}




\end{document}